\documentclass{article}

\usepackage[final]{neurips_2022}

\usepackage[utf8]{inputenc} % allow utf-8 input
\usepackage[T1]{fontenc}    % use 8-bit T1 fonts
\usepackage{hyperref}       % hyperlinks
\usepackage{url}            % simple URL typesetting
\usepackage{booktabs}       % professional-quality tables
\usepackage{amsfonts}       % blackboard math symbols
\usepackage{nicefrac}       % compact symbols for 1/2, etc.
\usepackage{microtype}      % microtypography
\usepackage{xcolor}         % colors
% --------------- Added packages -------------------
\usepackage{hyperref}
\usepackage{amsfonts}
\usepackage{xcolor}
\usepackage{amsmath}
\usepackage{amsthm}
\usepackage{graphicx}
\usepackage{algorithm}
\usepackage{algpseudocode}
\usepackage{caption}
\usepackage{subcaption}
\usepackage{appendix}

\newtheorem{prop}{Proposition}[section]

\newcommand{\mainfigsize}{0.22}
\newcommand{\resultfigsize}{0.13}
\newcommand{\sacfigsize}{0.165}
\newcommand{\appendixresultfigsize}{0.25}
\newcommand{\LB}{\mathrm{LB}}
\newcommand{\UB}{\mathrm{UB}}

\newcommand{\etal}{\textit{et al.}}

% If your paper is accepted, change the options for the package
% aistats2022 as follows:
%
% \usepackage[accepted]{aistats2022}
%
% This option will print headings for the title of your paper and
% headings for the authors names, plus a copyright note at the end of
% the first column of the first page.

% If you set papersize explicitly, activate the following three lines:
%\special{papersize = 8.5in, 11in}
%\setlength{\pdfpageheight}{11in}
%\setlength{\pdfpagewidth}{8.5in}

% If you use natbib package, activate the following three lines:
% \usepackage[round]{natbib}
% \renewcommand{\bibname}{References}
% \renewcommand{\bibsection}{\subsubsection*{\bibname}}

% If you use BibTeX in apalike style, activate the following line:
%\bibliographystyle{apalike}

% If your paper is accepted and the title of your paper is very long,
% the style will print as headings an error message. Use the following
% command to supply a shorter title of your paper so that it can be
% used as headings.
%
%\runningtitle{I use this title instead because the last one was very long}

% If your paper is accepted and the number of authors is large, the
% style will print as headings an error message. Use the following
% command to supply a shorter version of the authors names so that
% they can be used as headings (for example, use only the surnames)
%
%\runningauthor{Surname 1, Surname 2, Surname 3, ...., Surname n}

% TODO: Change it to \twocolumn
% \onecolumn[
% \onecolumn

\title{Off-policy Reinforcement Learning with Optimistic Exploration and Distribution Correction}

\title{Off-policy Reinforcement Learning with Optimistic Exploration and Distribution Correction}
\author{%
  Jiachen Li \\
  UC Santa Barbara\\
  % examples of more authors
  \And
  Shuo Cheng \\
  Georgia Institute of Technology \\
  \And
  Zhenyu Liao \\
  Amazon \\
  \And
  Huayan Wang \\
  Cruise \\
  \And
  William Yang Wang \\
  UC Santa Barbara \\
  \And
  Qinxun Bai \\
  Horizon Robotics \\
}

\begin{document}
\maketitle

\begin{abstract}
  Improving the sample efficiency of reinforcement learning algorithms requires effective exploration. Following the principle of \textit{optimism in the face of uncertainty} (OFU), we train a separate exploration policy to maximize the approximate upper confidence bound of the critics in an off-policy actor-critic framework. However, this introduces extra differences between the replay buffer and the target policy regarding their stationary state-action distributions. To mitigate the off-policy-ness, we adapt the recently introduced DICE framework to learn a distribution correction ratio for off-policy RL training. In particular, we correct the training distribution for both policies and critics. Empirically, we evaluate our proposed method in several challenging continuous control tasks and show superior performance compared to state-of-the-art methods. We also conduct extensive ablation studies to demonstrate the effectiveness and rationality of the proposed method.
\end{abstract}

\section{Introduction}
% \jerry{(the last sentence needs some rewording, our method cannot avoid large amount of environment interactions using a premature policy either)}
% While these issues %can be addressed 
% have been studied from a variety of perspectives, we mainly focus on improving the exploration efficiency while stabilizing the training in an off-policy actor-critic framework. In particular, we maintain a replay buffer to store previously collected transitions, and train the policy and critics by sampling data from it. 

Incorporating high capacity function approximators (e.g., neural network) with actor-critic algorithms has shown great promise in many areas~\citep{mnih2013playing, silver2016mastering,kalashnikov2018qt,shani2005mdp,li2019multi}. However, its training instability and poor sample efficiency hinder its wider deployment in the real world. While these issues have been studied under different settings, one of the most powerful and practical frameworks is the off-policy actor-critic algorithm ~\citep{degris2012off,SAC,TD3,ddpg}, where a replay buffer is maintained to store previously collected transitions, and both the policy and the critic are trained by sampling transitions from it.
There are two major obstacles that a successful off-policy actor-critic algorithm has to overcome. Firstly, a carefully-designed exploration strategy is required to learn efficiently.
Secondly, off-policy training is unstable when combing with function approximation and bootstrapping, known as the \textit{deadly triad}~\citep{sutton2018reinforcement, van2018deep}. 

To stabilize training, state-of-the-art (SOTA) methods, e.g., SAC~\citep{SAC} and TD3~\citep{TD3}, 
use the minimum value between a pair of critics as an approximate lower bound to alleviate the overestimation of $Q$-values. To facilitate exploration, TD3 injects Gaussian white noise to the deterministic action while SAC explicitly trains its policy to have a target entropy depending on the dimensionality of the action space. Given the success of the OFU principle~\citep{auer2002finite,jin2020provably}, OAC proposes to maintain a separate exploration policy by maximizing an approximate upper bound of the critics and empirically showing that such an exploration strategy leads to more efficient exploration. To limit the amount of \textit{off-policyness} and stabilize training, OAC further constrains the KL divergence between the exploration and target policies.

While the intuition of OAC to leverage an ``optimistic" exploration policy could be beneficial for the efficient exploration, we argue that an important aspect of off-policy training has not been taken into account in all the above methods, i.e., a faithful policy gradient computation requires samples from the on-policy distribution which is in general not satisfied by directly sampling from the relay buffer in off-policy training. This issue becomes even more severe when a separate exploration policy is adopted. As a trade-off, OAC imposes a KL constraint between the exploration and the target policy, limiting the potential of deploying a more effective exploration policy to execute informative actions spuriously underestimated by the target policy. However, it still suffers from the distribution shift during off-policy training of the target policy.

In this work, we propose directly applying a distribution correction scheme in off-policy RL where a separate optimistic exploration policy can be trained solely on executing the most informative actions without worrying about its ``dissimilarity" to the target policy. Estimating the distribution correction ratio, however, is non-trivial.
Traditional methods based on importance sampling suffer from high variance~\citep{precup2001off}. The recently proposed Distribution Correction Estimation (DICE) family~\citep{nachum2019dualdice,nachum2019algaedice,zhang2020gendice, zhang2020gradientdice, uehara2020minimax, bestdice} provides us with a way to estimate the distribution correction ratio in a low-variance manner by solving a linear program (LP).
However, these methods are designed for the \textit{off-policy evaluation} (OPE) task, where both the target policy and replay buffer are fixed. It is unclear how to incorporate them into the more sophisticated off-policy RL.
% In this paper, we investigate to stabilize a more aggressive exploration strategy by correcting the training distribution. 

In this work, we successfully integrate the DICE optimization scheme into the off-policy actor-critic framework and stabilize the DICE-related training process by modifying the original optimization problem. 
Compared with OAC, our distribution-corrected training procedure enables learning a more effective exploration policy without enforcing any distance constraint between the exploration and the target policy.
Empirically, we show that our methods achieve a higher sample efficiency in challenging continuous control tasks than SOTA off-policy RL algorithms, demonstrating the power of the proposed distribution-corrected off-policy RL framework in allowing for a more effective exploration strategy design. Moreover, we examine the quality of our learned correction ratio by using it to construct the dual estimator proposed in \citep{bestdice} to estimate the average per-step rewards (on-policy rewards) of the RL target policy. We highlight that this is harder than the standard OPE setting, as our target policy and replay buffer are changing during RL training. Surprisingly, the dual estimator gives a decent estimation by matching the on-policy rewards, demonstrating the effectiveness of our distribution correction scheme.

% previous methods by employing a specialized exploration policy. 
To the best of our knowledge, this is the first work to improve the exploration efficiency of RL algorithms by explicitly correcting the training distribution rather than imposing divergence constraints, opening up a direction of potentially more profound exploration strategy design.

\section{Preliminaries}\label{sec:prili}

\subsection{Reinforcement Learning}
We consider the standard Reinforcement Learning (RL) setup that maximizes the expected discounted cumulative rewards in an infinite horizon 
% \jerry{(Our experimental tasks are finite-horizon?)}
Markov Decision Process (MDP). Formally, a MDP is defined by a tuple ($\mathcal{S}$, $\mathcal{A}$, $R$, $T$, $\gamma$), with state space $\mathcal{S}$, action space $\mathcal{A}$, reward function $R(s, a)$, transition function $T(s' | s, a)$, and discount factor $\gamma\in (0, 1)$. At timestep $t$, the agent selects an action $a_t \sim \pi(\cdot|s_t)$ for current state $s_t$ by policy $\pi(\cdot|s_t)$, receives a reward $r_t:=R(s_t, a_t)$ and transits to the next state $s_{t+1}\sim T(\cdot | s_t, a_t)$. We define the action value function of policy $\pi$ as $Q^\pi(s, a) = \mathbb{E}_{\tau\sim\pi:s_0=s, a_0=a}[\Sigma^{\infty}_{t=0}\gamma^t r_t]$, where $\tau = (s_0, a_0, r_0, s_{1}, a_{1}, r_{1}, \ldots)$ is a trajectory generated by rolling out in the environment following $\pi$. Thus, the standard RL objective can be expressed as $J =  \mathbb{E}_{s_0\sim\rho_0(s), a\in \pi}[Q^\pi(s_0, a)]$, where $\rho_0$ is the initial state distribution. We further denote the discounted marginal state distribution of $\pi$ as $d^\pi(s)$ and the corresponding state-action marginal as $d^\pi(s, a) = d^\pi(s)\pi(a|s)$. 
\subsection{Soft Actor-Critic (SAC)}
% it is promising to apply value-based methods to update the critics, and the critic's objective can be expressed as minimizing the expected single-step \textit{bellman error}. 

% the objective of critic's update is by minimizing the expected single-step \textit{bellman error}. When combined with function approximators, its training can be unstable, and suffer from overestimation.

% To tackle large state and action space, function approximators (e.g. neural network) are used to model the both the policy and critic. To improve sample efficiency, modern actor-critic algorithms train the critics in off-policy fashion and maintain a replay buffer to store history transitions. By sampling transitions from replay buffer, the critics can be learned via temporal-difference (TD) learning. However, TD learning is known to be unstable and suffers from overestimation with function approximation. 

To encourage exploration, SAC~\citep{SAC} optimizes the maximum entropy objective $J = \mathbb{E}_{\tau\sim\pi:s_0\sim\rho_0(s)}[\Sigma^{\infty}_{t=0}\gamma^t (r_t + \alpha \mathcal{H}(\cdot|\pi(s_t))]$, where $\mathcal{H}$ is the entropy function and $\alpha \in \mathbb{R}^+$. The resulting target policy $\pi$ has higher entropy, thus generating diverse actions during rollout. %exploration.

SAC learns the $Q$ value function as critics from the collected experience and uses it to guide the policy (actor) update. 
To stabilize training, SAC estimates lower confidence bound of the true $Q$ value by maintaining %double 
a pair of Q networks, $\hat{Q}_1$ and $\hat{Q}_2$, and taking their minimum, i.e., 
$\hat{Q}_{\mathrm{LB}}(s, a)=\min_{i\in\{1,2\}} \hat{Q}_i(s, a)$. The two Q networks are structured identically and trained in the same way but initialized differently.
The $Q$ value function is trained via value-based methods. The objective function minimizes the expected single-step \textit{bellman error}. It further keeps $\hat{Q}'_1$ and $\hat{Q}'_2$ as the exponential moving average of $\hat{Q}_1$ and $\hat{Q}_2$ to provide a stable training target. To this end, the target $Q$ value is computed by %the $Q$ function loss can be given by 
\begin{equation*}
\label{eq:q_target}
    \mathcal{B}^{\pi}\hat{Q}_{\mathrm{LB}}(s, a) = r - \alpha\gamma \log \pi(\cdot|s')
    + \gamma \min_{i\in\{1,2\}}\hat{Q}'_i\left(s', a'\right),
\end{equation*}
where $a' \sim \pi \left(\cdot \mid s'\right)$, and the training loss for $Q$ functions are
\begin{equation}
\label{eq:bias_q_loss}
    L^i_Q = \underset{(s, a, r, s')\sim D}{\mathbb{E}} \left(\hat{Q}_i(s, a) - \mathcal{B}^{\pi} \hat{Q}_{\mathrm{LB}}(s, a)\right)^2,
    i\in\{1, 2\}.
\end{equation}
As for the actor, SAC reparameterizes~\citep{kingma2013auto} the stochastic policy $\pi(\cdot|s)$ as $f_{\theta}(s, \varepsilon)$, where $\varepsilon \sim \mathcal{N}(0, I)$, and derives the policy gradient w.r.t its parameter $\theta$, which is given by
\begin{align}\label{eq:policy_grad}
    \nabla_{\theta} J^{\pi} = &\int_{s} d^\pi(s) \bigg(\int_{\varepsilon} \nabla_{\theta} \hat{Q}_{\mathrm{LB}}\left(s, f_{\theta}(s, \varepsilon)\right) \phi(\varepsilon) d \varepsilon + \alpha \int_{\varepsilon}-\nabla_{\theta} \log f_{\theta}(s, \varepsilon) \phi(\varepsilon) d \varepsilon \bigg)d s, 
\end{align}
where %$\varepsilon \sim \mathcal{N}(0, I)$ is a random variable, and 
$\phi(\varepsilon)$ denotes the density function of $\varepsilon$. 
% And $f_\theta$ is the reparameterization function so that $f_{\theta}(s, \varepsilon)$ returns the same distribution as $\pi(\cdot|s)$.
As we do not have access to the ground truth $d^\pi(s)$ during training, SAC approximates $d^\pi(s)$ with the state distribution $d^\mathcal{D}(s)$ of the replay buffer, and applies the following Monte-Carlo estimator of~\eqref{eq:policy_grad}:
\begin{equation}\label{eq:biased_estimator}
    % \nabla_{\theta} J^{\pi} & \approx 
    \nabla_{\theta} J^{\pi}
    \approx \sum_{s\in \mathcal{D}} \nabla_{\theta} \hat{Q}_{\mathrm{LB}}\left(s, f_{\theta}\left(s, \varepsilon\right)\right) - \alpha\nabla_{\theta} \log f_{\theta}\left(s, \varepsilon\right).
\end{equation}
However, when $d^\pi(s)$ exhibits large difference from $d^\mathcal{D}(s)$, the bias introduced by~\eqref{eq:biased_estimator} cannot be ignored. As we will show, correcting such a bias leads to large performance gain.

\subsection{Optimistic Actor-Critic (OAC)}
% \jiachen{TODO: 

% 1. Detail the epistemic uncertainty estimation

% 2. Talk about the target policy training, introduce $\beta_{LB}$

% 3. Determine if we need to go into the details on pessimistic underexplore \& directional uniformedness. 
% }

% Previous methods SAC and TD3 directly use the target Gaussian policy $\pi_T$ to explore the environment. The OAC paper identified that such an exploration strategy is inefficient, due to pessimistic under-exploration and the directional uniformedness of the Gaussian policy $\pi_T$. \jiachen{TODO: Need to add details here}. We refer interested readers to the original OAC paper for more details.

% As $\pi_T$ is trained to maximize the lower bound of a learned critic, it is prone to be trapped at the maximum of the lower bound. However, when the critic is inaccurate, the policy is hard to escape. 
OAC \citep{OAC} is built on top of SAC \citep{SAC}, and proposes to train a separate optimistic exploration policy $\pi_E$ in addition to the target policy $\pi_T$ for deployment. $\pi_E$ is designed to maximize an approximate upper confidence bound $\hat{Q}_{\mathrm{UB}}$ of the true $Q$ value, %besides 
while $\pi_T$ is trained to maximize an approximate lower bound $\hat{Q}_{\mathrm{LB}}$. 
%(\zhenyu{divide into two sentences?}). 
OAC shows that exploration using $\pi_E$ avoids \textit{pessimistic under-exploration} and \textit{directional uniformedness} \citep{OAC}, achieving higher sample efficiency over SAC.

% Besides having the policy $\pi_T$, OAC maintains a specific exploration policy $\pi_E$ to collect new transitions. Following the principle of \textit{optimism in the face of uncertainty} $\pi_E = \mathcal{N}(\mu_E, \sigma_E)$ is trained to maximize an upper confidence bound of the true $Q$ value. 

To obtain the approximate upper bound $\hat{Q}_{\mathrm{UB}}$, OAC 
% first estimates the epistemic uncertainty $\sigma_Q$ of the true $Q$ by modelling it as a Gaussian distribution. Therefore, its mean belief $\mu_Q$ and standard deviation $\sigma_Q$ can be given by
first computes the mean and variance of the $Q$ estimates from $\hat{Q}_1$ and $\hat{Q}_2$,
\begin{equation}\small
    \mu_Q = \frac{1}{2}(\hat{Q}_1 + \hat{Q}_2), \quad \sigma_Q = \sqrt{\sum_{i \in\{1,2\}} \frac{1}{2}\left(\hat{Q}_i-\mu_{Q}\right)^{2}}
    = \frac{1}{2}|\hat{Q}_1 - \hat{Q}_2|,
\end{equation}
% Thus, $\hat{Q}_{\mathrm{UB}}$ is defined as 
then define $\hat{Q}_{\mathrm{UB}} = \mu_Q + \beta_{\mathrm{UB}}\sigma_Q$, where $\beta_{\mathrm{UB}}\in \mathbb{R}^+$ controls the level of optimism. Note that the previous approximate lower bound $\hat{Q}_{\mathrm{LB}}(s, a)=\min_{i\in\{1,2\}} \hat{Q}_i(s, a)$
% $\hat{Q}_{\mathrm{LB}}$ 
can be expressed as $\hat{Q}_{\mathrm{LB}} = \mu_Q - \beta_{\mathrm{LB}}\sigma_Q$ with $\beta_{\mathrm{LB}} = 1$.

OAC derives a closed-form solution of the Gaussian exploration policy $\pi_E= (\mu_E, \Sigma_E)$, by solving an optimization problem that maximizes the expectation (w.r.t.\ $\pi_E$) of a linear approximation of $\hat{Q}_{\mathrm{UB}}$ %while constraining 
and constraints the maximum KL Divergence between $\pi_E$ and $\pi_T$. 
% The final solution turns out to be \textit{directionally informed}, i.e., $\mu_E$ shifts from the mean of the target policy $\mu_T$, while having $\sigma_E = \sigma_T$.
Given the trained target policy $\pi_T=(\mu_T, \Sigma_T)$, the closed-form solution of $\pi_E$ ends up having a $\mu_E$ from a \textit{directionally informed} shifts of $\mu_T$, and a $\Sigma_E$ equal to $\Sigma_T$.

% \begin{subequations}\label{prob:oac}
% \begin{align}
%     \mu_E, \sigma_E = \underset{\mu, \Sigma }{\arg \max }  \quad & \mathbb{E}_{a \sim \mathcal{N}(\mu, \Sigma)}\left[\bar{Q}_{\mathrm{UB}}(s, a)\right] \\
%     \textrm{s.t.} \quad & \mathrm{KL}\left(\mathcal{N}(\mu, \Sigma), \mathcal{N}\left(\mu_{T}, \Sigma_{T}\right)\right) \leq \delta \label{con:kl}\\
%     & \bar{Q}_{\mathrm{UB}}(s, a)=a^{\top}\left[\nabla_{a} \hat{Q}_{\mathrm{UB}}(s, a)\right]_{a=\mu_{T}}+\mathrm{const}\label{con:linear_approx}
% \end{align}
% \end{subequations}
Although the KL constraint enables a closed-form solution of $\pi_E$ and stabilizes the training, we argue that such a constraint 
% sacrifices $\pi_E$'s freedom to 
limits the potential of $\pi_E$
in executing more informative actions that may better correct the spurious estimates of critics, as $\pi_E$ is constrained to generate actions that are %close to 
not far from those generated by $\pi_T$
which is trained conservatively from the critics. 
In our framework, we remove the KL constraint 
to release the full potential of an optimistic exploration policy
% and directly train $\pi_E$ to optimize $\hat{Q}_{\mathrm{UB}}$, while maintaining the 
while addressing the training stability by explicitly correcting the biased gradient estimator in policy training.

\subsection{Distribution Correction Estimation}
% \jiachen{TODO: 

% 1. Introduce both the dual and primal problems.

% 2. Discuss the derivation of the optimization problem.
% }

Distribution Correction Estimation (DICE) family ~\citep{nachum2019dualdice, zhang2020gendice, zhang2020gradientdice, uehara2020minimax, nachum2020reinforcement, nachum2019algaedice, bestdice} is proposed to solve the OPE task,
which seeks an estimator of the policy value, i.e., the normalized expected per-step reward,
% It aims to estimate the policy value, defined as normalized expected per-step reward, 
from a static replay buffer $\mathcal{D}$.
DICE obtains an unbiased estimator %seeking to 
by estimating the distribution correction ratio $\zeta^*(s, a) = \frac{d^{\pi}(s, a)}{d^\mathcal{D}(s, a)}$, where $d^\mathcal{D}(s, a)$ is the state-action distribution of $\mathcal{D}$.

We adapt the DICE optimization scheme to the off-policy actor-critic framework, and use it to estimate the distribution correction ratio for our training.

\section{Optimistic Exploration with Explicit Distribution Correction}

% \begin{figure}[t]
%     \centering
%     \includegraphics[width=0.45\textwidth]{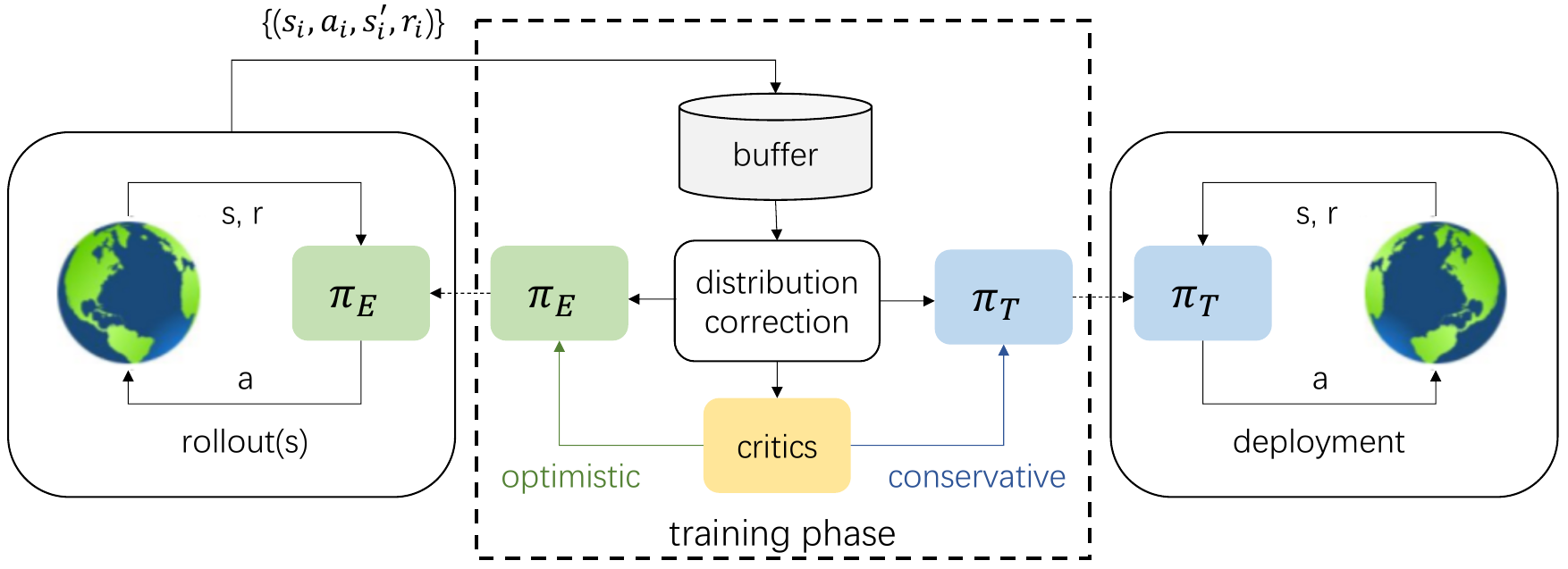}
%     \caption{Overall pipeline of our off-policy actor critic framework}
%     \label{fig:pipeline}
% \end{figure}

\begin{figure*}[t!]
    \centering
    \includegraphics[width=0.70\textwidth]{imgs/pipeline.pdf}
    \caption{Overview of our off-policy actor critic framework}
    \label{fig:overview}
\end{figure*}

% Inspired by 
Following the philosophy of OAC and recent advances in the off-policy RL \citep{lee2021sunrise}, we propose to simultaneously %optimize 
train a conservative target policy $\pi_T$ for deployment, and %train 
an optimistic exploration policy $\pi_E$ %to execute 
for executing the most informative actions w.r.t the current $Q$ value estimation. 
Figure~\ref{fig:overview} provides an overview of our framework.
% By explicitly correcting 
We apply explicit distribution corrections to train both policies and critics, and no constraint is imposed on $\pi_E$ in terms of its divergence from $\pi_T$. 
% The overall pipeline is shown in \autoref{fig:pipeline}.

% In this section, 
In the rest of this section, we first introduce our off-policy training objectives and theoretically show that it forms an unbiased policy gradient estimator for $\pi_T$ with the distribution correction ratio. 
We then elaborate on how to estimate the distribution correction ratio by adapting the DICE optimization scheme.
% Finally, we sum up our algorithms and provide pseudo-codes.
We close this section by summarizing our algorithm with pseudo-codes.

\subsection{Off-policy Training Objectives}
% We parameterize 
$\pi_T$ and $\pi_E$ are parameterized by $\theta_T$ and $\theta_E$ respectively. We start from the training objective for the target policy $\pi_T$. To prevent from overestimation, we follow previous work~\citep{SAC,TD3,OAC} in using the approximate lower bound $\hat{Q}_{\mathrm{LB}}$ as the critic,
and correct the sample distribution with $\frac{d^{\pi_T}(s, a)}{d^\mathcal{D}(s, a)}$,
\begin{equation}\label{obj:cor_target}
    \hat{J}^{\pi_T}= \sum_{(s, a)\in \mathcal{D}} \frac{d^{\pi_T}(s, a)}{d^\mathcal{D}(s, a)}\big( \hat{Q}_{\mathrm{LB}}\left(s, f_{\theta_T}\left(s, \varepsilon\right)\right)
    - \alpha \log f_{\theta_T}\left(s, \varepsilon\right)\big).
\end{equation}
The following proposition (proved in the \autoref{sec:proof_3p1}) shows that the gradient of training objective~\eqref{obj:cor_target} provides an unbiased estimation of the policy gradient~\eqref{eq:policy_grad}. This contrasts with previous off-policy actor-critic algorithms that directly apply the biased estimator~\eqref{eq:biased_estimator} in training the target policy.
\begin{prop}\label{prop:correct_dist}
$\nabla_{\theta_T}\hat{J}^{\pi_T}$ gives an unbiased estimate of the policy gradient \eqref{eq:policy_grad} with $\pi = \pi_T$.
% \begin{equation}\label{unbiased_target_est}
%     \nabla_{\theta_T}\hat{J}^{\pi_T} = \sum_{(s, a)\in \mathcal{D}}\frac{d^{\pi_T}(s, a)}{d^\mathcal{D}(s, a)} \nabla_{\theta_T} \left(\hat{Q}_{\mathrm{LB}}\left(s, f_{\theta_T}\left(s, \varepsilon\right)\right) - \alpha \log f_{\theta_T}\left(s, \varepsilon\right)\right)
% \end{equation}
\end{prop}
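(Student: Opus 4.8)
The plan is to interpret the finite sum in \eqref{obj:cor_target} as a Monte-Carlo estimate of an expectation over the replay-buffer distribution $d^\mathcal{D}(s,a)$ and the reparameterization noise $\varepsilon\sim\phi$, and then to show that differentiating this expectation reproduces \eqref{eq:policy_grad} exactly. Throughout, I treat the correction ratio $\frac{d^{\pi_T}(s,a)}{d^\mathcal{D}(s,a)}$ as detached from $\theta_T$ (it is supplied by the separately trained DICE estimator and not back-propagated through); this matches the standard actor-critic convention under which \eqref{eq:policy_grad} itself holds $d^\pi(s)$ fixed, so the two gradients are compared on equal footing.

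First I would pass the gradient inside the expectation,
\begin{align}
\nabla_{\theta_T}\hat{J}^{\pi_T} = \mathbb{E}_{(s,a)\sim d^\mathcal{D},\,\varepsilon\sim\phi}\Big[ \tfrac{d^{\pi_T}(s,a)}{d^\mathcal{D}(s,a)}\, \nabla_{\theta_T}\big( \hat{Q}_{\LB}(s, f_{\theta_T}(s,\varepsilon)) - \alpha\log f_{\theta_T}(s,\varepsilon) \big) \Big],
\end{align}
which is legitimate since the ratio carries no $\theta_T$-dependence. Next I perform a change of measure: writing the expectation as an integral against $d^\mathcal{D}(s,a)\phi(\varepsilon)$ and cancelling $d^\mathcal{D}(s,a)$ against the denominator of the ratio turns the sampling distribution into $d^{\pi_T}(s,a)$.

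The crucial step is to reconcile this state-action ratio with the state-only structure of \eqref{eq:policy_grad}. The integrand $\nabla_{\theta_T}\big(\hat{Q}_{\LB}(s,f_{\theta_T}(s,\varepsilon)) - \alpha\log f_{\theta_T}(s,\varepsilon)\big)$ depends on the buffer sample only through the state $s$ (the action is regenerated as $f_{\theta_T}(s,\varepsilon)$), so I can marginalize out the buffer action via the identity $\int_a d^{\pi_T}(s,a)\,da = d^{\pi_T}(s)\int_a \pi_T(a\mid s)\,da = d^{\pi_T}(s)$. This collapses the state-action expectation to one over $d^{\pi_T}(s)$, and after separating the $\varepsilon$-integral against $\phi$ the resulting expression is term-for-term identical to \eqref{eq:policy_grad} with $\pi=\pi_T$, establishing unbiasedness.

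I expect the main obstacle to be precisely this reconciliation of the state-action correction ratio with the state-marginal form of the target gradient: one must argue that correcting with $\frac{d^{\pi_T}(s,a)}{d^\mathcal{D}(s,a)}$ (the quantity DICE naturally returns) rather than a state-only ratio $\frac{d^{\pi_T}(s)}{d^\mathcal{D}(s)}$ still yields the correct answer, which works only because the reparameterized integrand is action-independent. A secondary point to make explicit is the detachment of the ratio from $\theta_T$, so that no spurious gradient of $d^{\pi_T}$ through $\theta_T$ appears and the match with \eqref{eq:policy_grad} is exact.
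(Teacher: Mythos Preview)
Your proposal is correct and follows essentially the same route as the paper's own proof: write $\nabla_{\theta_T}\hat{J}^{\pi_T}$ as an expectation over $d^{\mathcal{D}}(s,a)$, cancel the ratio to change measure to $d^{\pi_T}(s,a)$, and then marginalize out the buffer action $a$ using the fact that the reparameterized integrand depends only on $s$ and $\varepsilon$. The paper's proof is terser and leaves the detachment of the ratio and the action-independence of the integrand implicit, whereas you call these out explicitly, but the argument is the same.
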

% We defer the proof of Proposition~\ref{prop:correct_dist} to the Appendix.
% \begin{proof}
% $\nabla_{\theta_T}\hat{J}_{cor, \mathrm{LB}}$ is an unbiased estimator for
% \begin{align*}
%      &\quad\int_{s}\int_{a} d^{\mathcal{D}}(s, a) \frac{d^{\pi_T}(s, a)}{d^\mathcal{D}(s, a)} \left(\int_{\varepsilon} \nabla_{\theta_T} \left(\hat{Q}_{\mathrm{LB}}\left(s, f_{\theta_T}(s, \varepsilon)\right) - \alpha\log f_{\theta_T}(s, \varepsilon)\right) \phi(\varepsilon) d \varepsilon \right)d a d s  \\
%   &  = \int_{s} d^{\pi_T}(s) \left(\int_{\varepsilon} \nabla_{\theta_T} \left(\hat{Q}_{\mathrm{LB}}\left(s, f_{\theta_T}(s, \varepsilon)\right) - \alpha\log f_{\theta_T}(s, \varepsilon)\right) \phi(\varepsilon) d \varepsilon \right) d s= \nabla_\theta J^{\pi_T}
% \end{align*}
% \end{proof}

Regarding the exploration policy $\pi_E$, there are two considerations. Firstly, it should explore optimistically w.r.t. the estimated $Q$ values, in order to collect informative experience to efficiently correct spurious $Q$ estimates. Therefore, we follow OAC to use the approximate upper bound $\hat{Q}_{\UB}$ as critic in the objective.
Secondly, 
% note that 
the ultimate goal of $\pi_E$ and better $Q$ value estimation is to facilitate a more accurate gradient estimate for $\pi_T$.
% through collecting informative experience to correct the spurious $Q$ value estimation.
% it should serve to reduce the uncertainty of $\pi_T$'s gradient estimate. 
Therefore, the training sample distribution for 
% that construct 
$\pi_E$'s objective should 
% come from the stationary state distribution of 
be consistent with that for
$\pi_T$'s. As a result, we apply the same correction ratio as in~\eqref{obj:cor_target} to get the following objective for $\pi_E$, 
% objective below
\begin{equation}\label{obj:cor_explore}
    \hat{J}^{\pi_E} = \sum_{(s, a)\in \mathcal{D}} \frac{d^{\pi_T}(s, a)}{d^\mathcal{D}(s, a)}\big( \hat{Q}_{\mathrm{UB}}\left(s, f_{\theta_E}\left(s, \varepsilon\right)\right)
    - \alpha \log f_{\theta_E}\left(s, \varepsilon\right)\big),
\end{equation}
again, this is different from all existing works that adopt a separate exploration policy.

As for training the critics, 
while the best sample distribution for performing off-policy Q learning is still an open problem \citep{DISCOR},
there exists some %both 
empirical work\citep{LFIW,nachum2019algaedice} 
% and theoretical work \citep{sutton2018reinforcement,nachum2018smoothed,silver2014deterministic} 
showing that using samples from the 
% on-policy distribution 
same distribution as that for policy training
benefits the training of $Q$ functions. Therefore, we also apply the same correction ratio to $Q$ function objectives~\eqref{eq:bias_q_loss} and obtain the corrected objectives: 
\begin{equation}
\label{obj:cor_q}
    \hat{L}^i_Q = \sum_{(s, a)\in \mathcal{D}}  \frac{d^{\pi_T}(s, a)}{d^\mathcal{D}(s, a)} \left(\hat{Q}_i(s, a) - \mathcal{B}^{\pi}\hat{Q}_{\LB}(s, a)\right)^2
\end{equation}

\subsection{Estimating Distribution Correction Ratio via DICE}\label{sec:correct_for_rl}
We estimate the distribution correction ratio $\frac{d^{\pi_T}(s, a)}{d^\mathcal{D}(s, a)}$ by adapting the DICE optimization scheme~\citep{bestdice}. \cite{bestdice} shows that existing DICE estimators can be formulated as the following min-max distributional linear program with different regularizations, %as below.
\begin{align}\label{eq:prob_dice}
    \max_{\zeta \geq 0} \min _{\nu, \lambda}\,\, L_{D}(\zeta, \nu, \lambda) := & (1-\gamma) \cdot \mathbb{E}_{a_{0} \sim \pi_T\left(s_{0}\right) \atop s_{0}\in \rho_0(s)}\left[\nu\left(s_{0}, a_{0}\right)\right] \nonumber \\
    & +\lambda\cdot \left(1 - \mathbb{E}_{\left(s, a, r, s^{\prime}\right) \sim d^{\mathcal{D}}}\left[\zeta(s, a)\right]\right) - \alpha_{\zeta} \cdot \mathbb{E}_{(s, a) \sim d^{\mathcal{D}}}\left[g_{2}(\zeta(s, a))\right]\nonumber\\
    & +\mathbb{E}_{\left(s, a, r, s^{\prime}\right) \sim d^{\mathcal{D}}}\left[\zeta(s, a) \cdot\left(\mathcal{B}^{\pi_T}\nu(s, a)-\nu(s, a)\right)\right] \nonumber\\
    & +\alpha_{\nu} \cdot \mathbb{E}_{(s, a) \sim d^{\mathcal{D}}}\left[g_{1}(\nu(s, a))\right],
\end{align}
where $g_1$ and $g_2$ are some convex and lower-semicontinuous regularization functions, $\alpha_\nu$ and $\alpha_\zeta$ control the corresponding regularization strength. By slightly abusing the notation, we define $\mathcal{B}^{\pi_T}\nu(s, a) = \alpha_{R} \cdot R(s, a)+\gamma \nu\left(s^{\prime}, a^{\prime}\right), a' \sim \pi_T(s')$ with $\alpha_R$ scaling the reward. Solving the optimization problem~\eqref{eq:prob_dice} for $\zeta$ %above 
gives us $\zeta^{*}(s, a) = \frac{d^{\pi_T}(s, a)}{d^\mathcal{D}(s, a)}$, which is exactly
the distribution correction ratio.
Note that the formulation supports both discounted and undiscounted return cases. 

We now provide intuitions for \eqref{eq:prob_dice}. For simplicity, we refer to RHS as the RHS of \eqref{eq:prob_dice}. As $\zeta^{*}(s, a)$ is the distribution correction ratio, $\lambda$ is the Lagrangian coefficient to ensure the expectation of $\zeta^{*}(s, a)$ to be $1$. $\nu$ can be viewed as a synonym to the $Q$, with the first line of the RHS giving the normalized expected per-step reward of policy $\pi_T$. Line 3 and 4 together can be viewed as the expected single-step \textit{bellman error}, which becomes clear if referring to the Fenchel Duality and setting $g_2(\zeta) = \frac{1}{2}\zeta^2$. We refer interested readers to the original paper \citep{bestdice} for more details.

% By parameterizing $\nu$ and $\zeta$ using function approximators, OPE algorithms \citep{nachum2019dualdice,zhang2020gendice,nachum2019algaedice} directly solve for problem \eqref{eq:prob_dice} via gradient descent. (\zhenyu{The first sentence I would personally prefer to point out the fundamental issue for OPE. This sentence seems a little bit unrelated to our theme here. Gradient descent itself is not a problem 
% anyway. }) However, implementing such algorithms into 
Directly applying the above DICE algorithms to
the off-policy RL settings cast great optimization difficulties. While OPE assumes a fixed target policy and a static replay buffer with sufficient state-action space coverage \citep{nachum2019dualdice}, in RL, both the target policy and the replay buffer change during training. %in RL. 
We thus make three modifications to~\eqref{eq:prob_dice} to overcome the challenges.

Our first modification to~\eqref{eq:prob_dice} is removing the $\mathbb{E}_{ s_{0}\in \rho_0(s), a_{0} \sim \pi\left(s_{0}\right)}\left[\nu\left(s_{0}, a_{0}\right)\right]$ term from the objective. 
In the initial training stage, the state-action pairs stored in the replay buffer only achieve little coverage of the entire state-action space, especially for high dimensional continuous control tasks. %And 
As a result, the target policy $\pi_T$ may sample some $a_0$ that is very different from any collected experience in the replay buffer.
% for given initial state $s_0$. 
% Consequently, 
$\nu(s_0, a_0)$ will then be minimized pathologically low and thus incur a huge error. Note that similar phenomena also appear in the offline RL~\citep{fujimoto2019off, kumar2019stabilizing}. 
Our second modification is to add an absolute value operator $|\cdot|$ to the single step bellman error $\mathcal{B}^{\pi}\nu(s, a) - \nu(s, a)$. We empirically find that it increases the training stability. The intuition comes from considering solving for $L_{\zeta}$ defined by~\eqref{loss:zeta}, since when $g_2$ is square function then the absolute value automatically helps the $\zeta$ function lie on the positive region without any hard projection.

Our third modification to the original problem~\eqref{eq:prob_dice} is to set both $\alpha_\nu$ and $\alpha_\zeta$ greater than zero.
This is different from existing DICE schemes~\citep{bestdice} that set either $\alpha_\nu$ or $\alpha_\zeta$ to be zero.
Though such a setting introduces bias and has never been applied in existing DICE algorithms, it dramatically stabilizes the training in practice.
Moreover, adding two regularizations on a min-max program has been widely applied for faster convergent in optimization literature \citep{allenzhu_et_al:LIPIcs:2016:6332}. 

To this end, we obtain as follows the loss functions for each of $\nu$, $\zeta$, and $\lambda$ respectively,
\begin{align}
    L_{\lambda} & = \lambda \cdot \left(1 - \mathbb{E}_{\left(s, a, r, s^{\prime}\right) \sim d^{\mathcal{D}}}[\zeta(s, a)]\right),
    \label{loss:lambda}\\
    L_{\nu} & = \mathbb{E}_{\left(s, a, r, s^{\prime}\right) \sim d^{\mathcal{D}}}\left[\zeta(s, a) \cdot\left|\mathcal{B}^{\pi_T}\nu(s, a)-\nu(s, a)\right|\right] + \alpha_{\nu} \cdot \mathbb{E}_{(s, a) \sim d^{\mathcal{D}}}\left[g_{1}(\nu(s, a))\right], \label{loss:nu}\\
    L_{\zeta} & = \alpha_{\zeta} \cdot \mathbb{E}_{(s, a) \sim d^{\mathcal{D}}}\left[g_{2}(\zeta(s, a))\right] -\mathbb{E}_{\left(s, a, r, s^{\prime}\right) \sim d^{\mathcal{D}}}\left[\zeta(s, a) \cdot\left(\left|\mathcal{B}^{\pi_T}\nu(s, a)-\nu(s, a)\right| - \lambda\right)\right]. \label{loss:zeta}
\end{align}
Both $\zeta$ and $\nu$ are parameterized by neural networks with parameters $\eta_{\zeta}$ and $\eta_{\nu}$ respectively. To stabilize the training, we maintain a $\nu'$ as the exponential moving average of $\nu$ to compute $\mathcal{B}^{\pi_T}\nu(s, a)$.

When applying the learned $\zeta(s, a)$ as a surrogate of $ \frac{d^{\pi_T}(s, a)}{d^\mathcal{D}(s, a)}$ in the RL training objectives, we need to tackle the issue of finite samples since our training is in minibatch.
% \zhenyu{may be one more sentence explaining why minibatch requires tempareture? } 
Inspired by \citep{LFIW}, we apply self-normalization to $\zeta(s, a)$ with temperature $T$, and use the $\tilde{\zeta}(s, a)$ defined below to construct the RL objectives \eqref{obj:cor_target}, \eqref{obj:cor_explore} and \eqref{obj:cor_q}.
\begin{equation}\label{eq:zeta}
    \tilde{\zeta}(s, a) = \frac{\zeta(s, a)^{1/T}}{\sum_{(s, a)\sim \mathcal{D}}\zeta(s, a)^{1/T}}.
\end{equation}
\subsection{Summary of Algorithm}
To this end, we summarize our overall algorithm in Algorithm~\ref{alg}. 
% Unlike OAC, we update both $\pi_T$ and $\pi_E$ via gradient descent. 
Like conventional off-policy actor-critic algorithms, we alternatively perform environment interactions following $\pi_E$ and update of all trainable parameters. 
In each training step, our algorithm first updates the $\nu$, $\zeta$, and $\lambda$ related to DICE, using SGD w.r.t. losses~\eqref{loss:nu},~\eqref{loss:zeta}, and~\eqref{loss:lambda} respectively, and computes the distribution correction ratio $\tilde{\zeta}$ from the updated $\zeta$. With $\tilde{\zeta}$, our algorithm then performs RL training to update $\pi_T$, $\pi_E$, $\hat{Q}_1$, and $\hat{Q}_2$. Finally, soft update of target critics $\hat{Q}'_1$, $\hat{Q}'_2$, and $\nu'$ is performed

\begin{algorithm}[t]
    \caption{Optimistic Exploration with Explicit Distribution Correction}\label{alg}
    {\bf Input}: target and explore policy parameters $\theta_T$, $\theta_E$; $\hat{Q}_1$, $\hat{Q}_2$ parameters $\omega_1$, $\omega_2$; $\hat{Q}'_1$, $\hat{Q}'_2$ parameters $\omega'_1$, $\omega'_2$; $\nu$, $\zeta$ parameters $\eta_\nu$, 
    $\eta_\zeta$, $\nu'$ parameters $\eta_{\nu'}$, and $\lambda$; replay buffer $\mathcal{D}$
    \begin{algorithmic}[1]
        \State $w'_{1} \leftarrow w_{1}$, $w'_{2} \leftarrow w_{2}$,  $\eta_{\nu'} \leftarrow \eta_{\nu}$,  $\mathcal{D}\leftarrow \emptyset$
        \Repeat
            \For{each environment step}
                \State Sample action $a_t\sim\pi_E(\cdot|s_t)$
                \State Observe $s_{t+1}\sim T(\cdot|s_t, a_T)$ and $r\left(\mathbf{s}_{t}, \mathbf{a}_{t}\right)$
                \State $\mathcal{D} \leftarrow \mathcal{D} \cup\left\{\left(\mathbf{s}_{t}, \mathbf{a}_{t}, r\left(\mathbf{s}_{t}, \mathbf{a}_{t}\right), \mathbf{s}_{t+1}\right)\right\}$
            \EndFor

            \For{each training step}
                \State Sample minibatch $B = \{ (s,a,r,s') \}$ from $\mathcal{D}$ 
                \State \# DICE training
                \State Update $\eta_\nu$, $\eta_\zeta$, $\lambda$ via $\nabla_{\eta_\nu} L_{\nu}$, $\nabla_{\eta_\zeta} L_{\zeta}$,  $\nabla_{\psi_\lambda} L_{\lambda}$
                % \State Update $\lambda$ with $\nabla_{\psi_\lambda} L_{\lambda}$
                \State Calculate $\tilde{\zeta}$ with \eqref{eq:zeta}
                \State \# RL training
                \State Update $\theta_T$ with $\nabla_{\theta_T}\hat{J}^{\pi_T}$
                \State Update $\theta_E$ with $\nabla_{\theta_E}\hat{J}^{\pi_E}$
                \State Update $w_{i}$ with $\nabla_{w_{i}}\hat{L}^{i}_Q$, $i\in\{ 1, 2\}$
                \State \# Soft update
                \State $w'_{i} \leftarrow \tau w_{i}+(1-\tau) w'_{i}$, $i\in\{ 1, 2\}$; $\eta_{\nu'} \leftarrow \tau \eta_{\nu} + (1-\tau) \eta_{\nu'}$
            \EndFor
        \Until{convergence}
    \end{algorithmic}
\end{algorithm}

\section{Experimental Results}
We validate the proposed algorithm from three perspectives. First, we compare our algorithm with SOTA off-policy actor-critic algorithms on challenging control tasks. 
Second, we examine the quality of our learned correction ratio by evaluating whether it helps to provide a reasonable estimate of the on-policy rewards.
% Second, we analyze whether our DICE optimization scheme provides a reasonable estimation of the on-policy rewards. 
Third, we ablate key components of our algorithms to evaluate their contributions to the overall performance.

We evaluate on 5 continuous control tasks from the MuJoCo \citep{todorov2012mujoco} benchmark and compare it with SAC \citep{SAC} and OAC \citep{OAC}. We further implement a strong baseline OAC + DICE by integrating our DICE correction into OAC, i.e., similar to \eqref{obj:cor_target} and \eqref{obj:cor_q}, using the learned distribution correction ratio to correct the sample distribution for both the target policy training and the critics training.

For each task, we run 5 different random seeds for each algorithm. We evaluate each algorithm every 1000 training steps. Our algorithm introduces additional hyper-parameters (HP), including $\beta_{\mathrm{LB}}$ and $\beta_{\mathrm{UB}}$ that control the extent of conservativeness and optimism respectively, the temperature $T$ from~\eqref{obj:cor_q}, and the set of hyper-parameters from the DICE optimization~\eqref{loss:lambda},~\eqref{loss:nu},~\eqref{loss:zeta}. We provide detailed hyper-parameter tuning procedures in the \autoref{sec:hp}. 
Note that the implementation of our algorithms are based on OAC's officially released codes\footnote{\url{https://github.com/microsoft/oac-explore}}, which themselves are based on the SAC implementation in RLkit \footnote{\url{https://github.com/rail-berkeley/rlkit}}. 
While we are aware of different implementations of SAC with performance variance on different tasks, 
given the difficulty in reproducing the best RL results in practice~\citep{engstrom2020implementation, pineau2021improving},
% We are aware that different implementations lead to a different performance in RL, and 
% we emphasize that we are making 
we have tried our best to make all comparisons fair and reproducible by implementing all methods and variants in a unified and credible codebase.
% fair comparison between our algorithms and all the other baselines, given the difficulty to reproduce the best RL results in practice \citep{engstrom2020implementation, pineau2021improving}.

% \jerry{Note that for the term
% $\mathcal{B}^{\pi_T}\nu(s, a) = \alpha_{R} \cdot R(s, a)+\gamma \nu\left(s^{\prime}, a^{\prime}\right), a' \sim \pi_T(s')$
% in~\eqref{loss:zeta},
% we set $\gamma < 1$, 
% % (\zhenyu{I think you said $\gamma = 1$ above. You need to provide the definition of  $\mathcal{B}^{\pi_T}\nu(s, a)$ here to avoid confusion })
% % in $\mathcal{B}^{\pi_T}\nu(s, a)$, 
% as we empirically find it improves training stability.}

\subsection{Evaluation on Continuous Control Tasks}
\label{sec:avg_return}

% \begin{figure*}[h]
%   \makebox[\textwidth]{
  
%   \begin{subfigure}{\appendixresultfigsize\paperwidth}
%     \includegraphics[width=\linewidth]{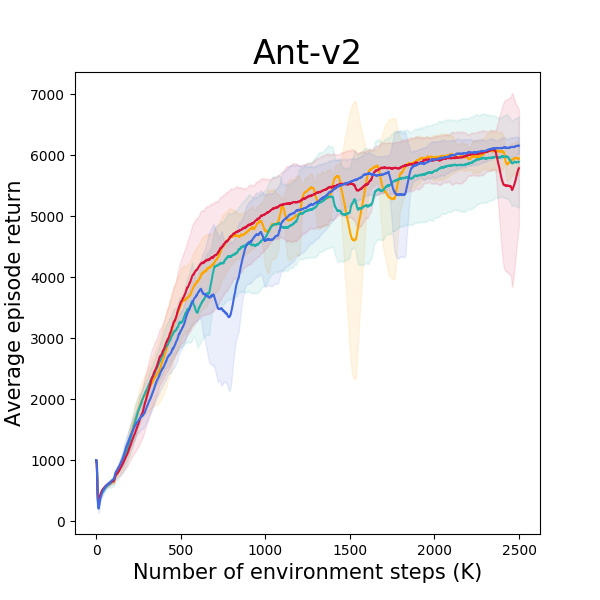}
%   \end{subfigure}
  
%   \begin{subfigure}{\appendixresultfigsize\paperwidth}
%     \includegraphics[width=\linewidth]{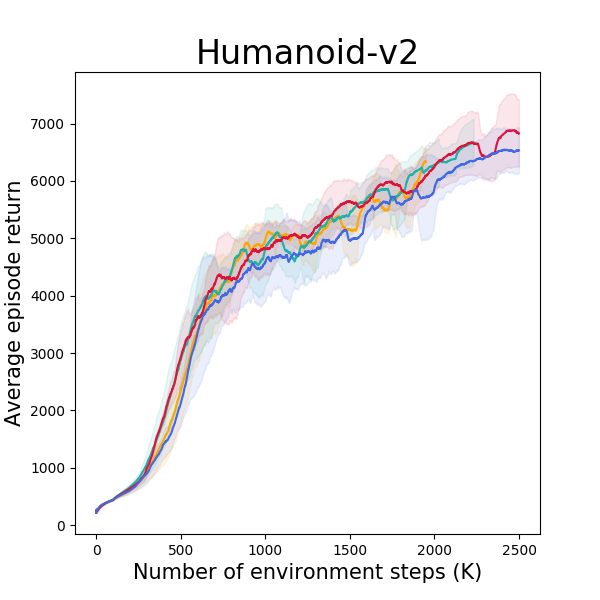}
%   \end{subfigure}
 
%   \begin{subfigure}{\appendixresultfigsize\paperwidth}
%     \includegraphics[width=\linewidth]{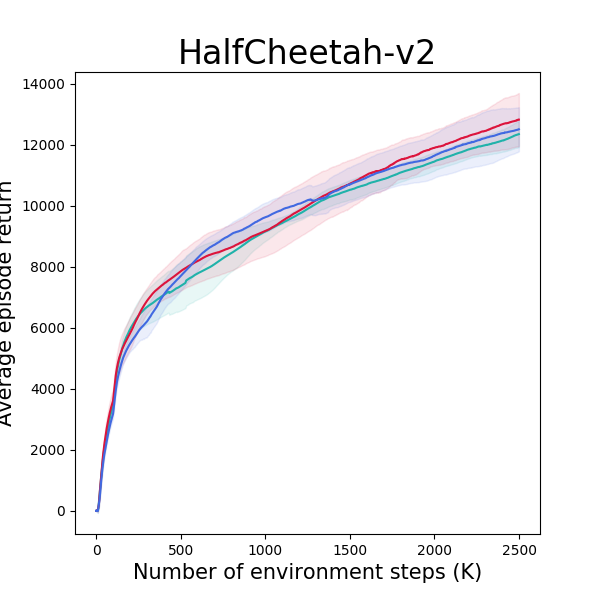}
%   \end{subfigure}
%     }

%   \makebox[\textwidth]{
%   \begin{subfigure}{\appendixresultfigsize\paperwidth}
%     \includegraphics[width=\linewidth]{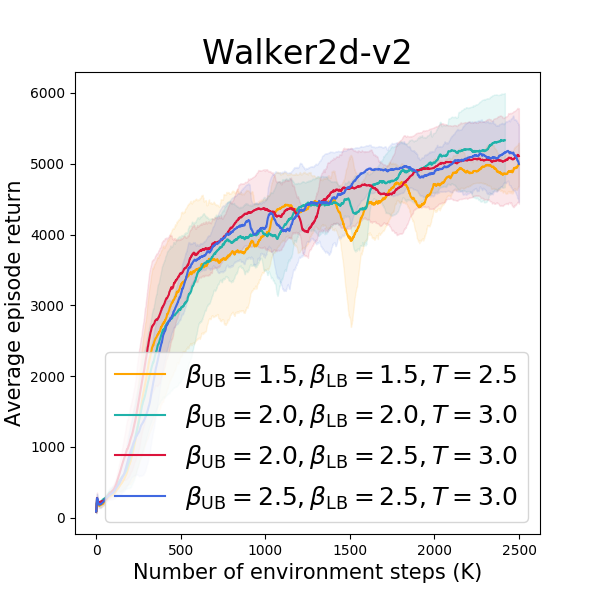}
%   \end{subfigure}

%   \begin{subfigure}{\appendixresultfigsize\paperwidth}
%     \includegraphics[width=\linewidth]{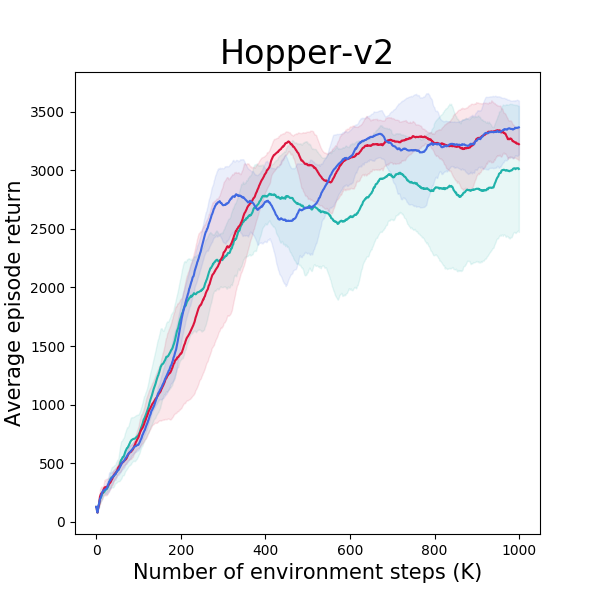}
%   \end{subfigure}}

%   \caption{Ablation study on the $\beta_{\mathrm{UB}},\beta_{\mathrm{LB}}$ of our method on 4 MuJoco tasks. 
%   x-axis indicates the number of environment steps. 
%   y-axis indicates the total undiscounted return.
%   The shaded areas denote one standard deviation between 5 runs of different random seeds.}
% \label{fig:ablations-ub-lb}
% \end{figure*}

\begin{figure*}[t]
  \makebox[\textwidth]{
  
  \begin{subfigure}{\mainfigsize\paperwidth}
    \includegraphics[width=\linewidth]{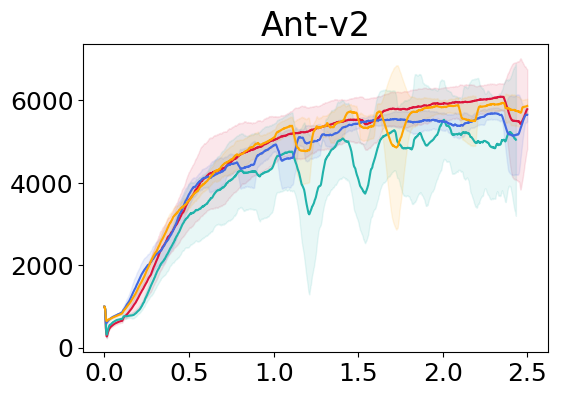}
  \end{subfigure}
  
  \begin{subfigure}{\mainfigsize\paperwidth}
    \includegraphics[width=\linewidth]{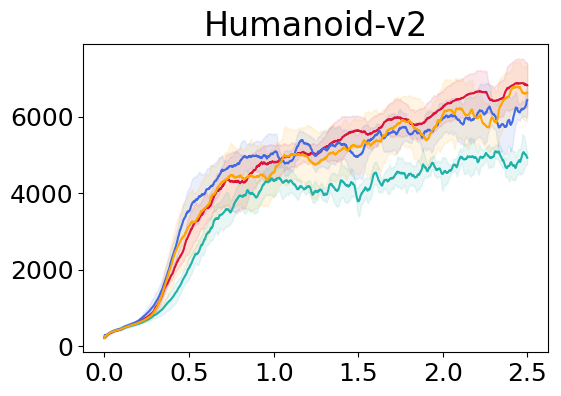}
  \end{subfigure}
    
  \begin{subfigure}{\mainfigsize\paperwidth}
    \includegraphics[width=\linewidth]{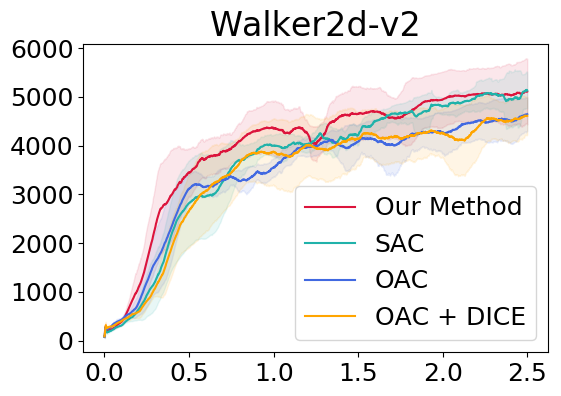}
  \end{subfigure}
  }
  
  \makebox[\textwidth]{
  \begin{subfigure}{\mainfigsize\paperwidth}
    \includegraphics[width=\linewidth]{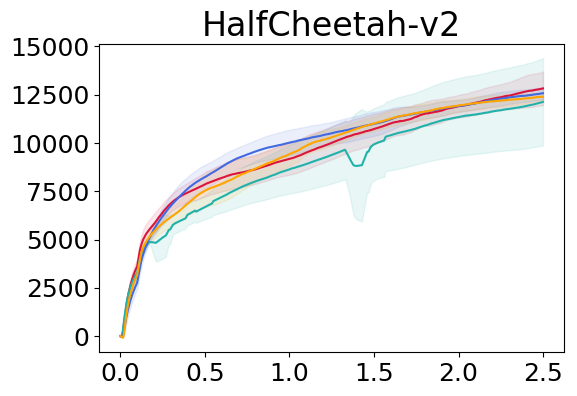}
  \end{subfigure}
  
  \begin{subfigure}{\mainfigsize\paperwidth}
    \includegraphics[width=\linewidth]{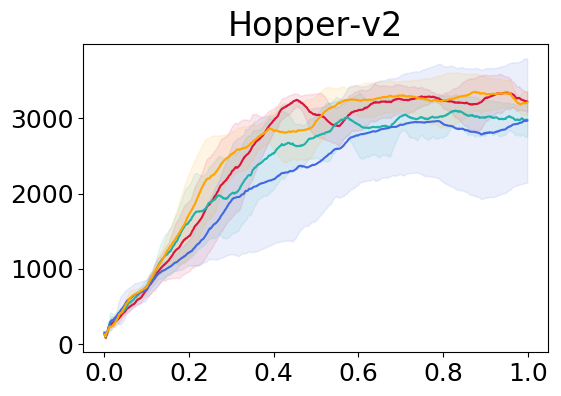}
  \end{subfigure}}
    
  \caption{Our methods versus SAC, OAC, and the strong baseline OAC + DICE on 5 MuJoco tasks. 
  x-axis indicates the number of environment steps. 
  y-axis indicates the total undiscounted return.
  The shaded areas denote one standard deviation between 5 runs of different random seeds.}
\label{fig:Ours_vs_SAC_OAC}
\end{figure*}

As shown in \autoref{fig:Ours_vs_SAC_OAC}, our method consistently outperforms both OAC and SAC on 4 out of the 5 tasks, while being on par with OAC on the less challenging HalfCheetah. 
The superior performance of our method vs OAC and SAC demonstrates the effectiveness of the proposed framework and the benefits of both the optimistic exploration and distribution correction for off-policy RL.
% Our exceptional performance on the more complicated Humanoid, Ant, and Walker2d demonstrates the effectiveness of our method and the benefits of doing explicit exploration while explicitly correcting the training distribution.

We would also like to highlight the performance of our specific baseline OAC + DICE.
% Next, We compare OAC + DICE with OAC. 
While it outperforms OAC by a large margin on Hopper, there are only slight improvements over OAC on Humanoid and Ant, even with no performance gain on Walker2d and HalfCheetah. These results demonstrate that the proposed distribution correction scheme at least does not harm the performance and often leads to improvements for off-policy actor-critic algorithms beyond ours. 
Moreover, the performance of our method vs. OAC + DICE on the more challenging Humanoid, Ant, and Walker2d demonstrates that our exploration policy design is superior to that of OAC when paired with the DICE distribution correction. 
% However, the improvement made by OAC + DICE over OAC is limited compared to our method's improvement upon its baseline without distribution correction, as in \autoref{fig:ablations}. 
This is not surprising. Due to the use of KL constraint in OAC, 
the actions generated by $\pi_E$ can not differ much from those generated by $\pi_T$ for most of the states. 
As a result, the potential of $\pi_E$ in collecting the most informative experience for correcting the critics has been largely constrained. 
Our method, on the other hand, does not impose any constraints on the distance between $\pi_E$ and $\pi_T$, and therefore learns a more effective exploration policy for collecting the informative experience for off-policy training.

Similarly, we implement SAC + DICE by integrating our DICE correction ratio into SAC without any HP tuning. As shown in \autoref{fig:sac-dice}, SAC + DICE outperforms SAC on the challenging Humanoid and Walker2d by a clear margin while being on par with SAC on the Ant. There is also a slight performance gain on the Hopper. It is worth noting that integrating our learned DICE correction ratio into \textbf{NO DICE} (a variant of ours), OAC, and SAC all results in a performance gain. We thus highlight that our proposed DICE correction scheme
effectively benefits general off-policy actor-critic training.
\begin{figure*}[t]
  \makebox[\textwidth]{
  
  \begin{subfigure}{\sacfigsize\paperwidth}
    \includegraphics[width=\linewidth]{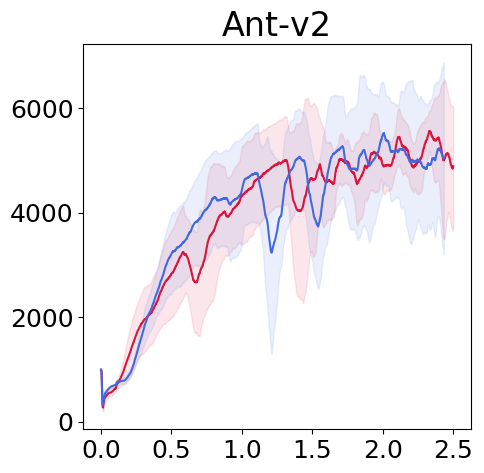}
  \end{subfigure}
  
  \begin{subfigure}{\sacfigsize\paperwidth}
    \includegraphics[width=\linewidth]{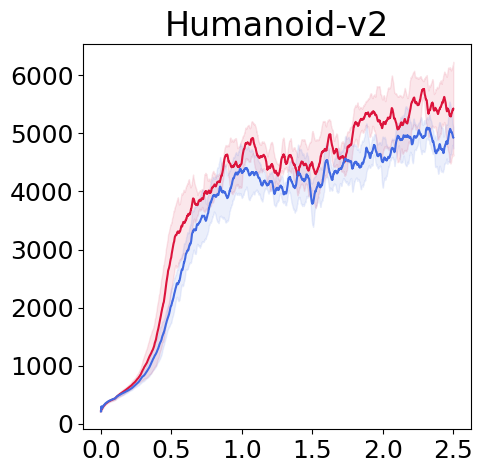}
  \end{subfigure}

  \begin{subfigure}{\sacfigsize\paperwidth}
    \includegraphics[width=\linewidth]{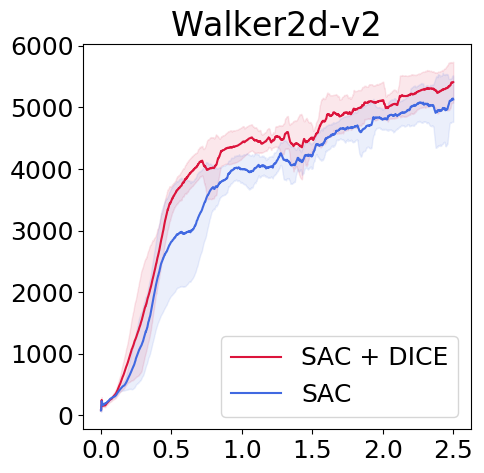}
  \end{subfigure}

  \begin{subfigure}{\sacfigsize\paperwidth}
    \includegraphics[width=\linewidth]{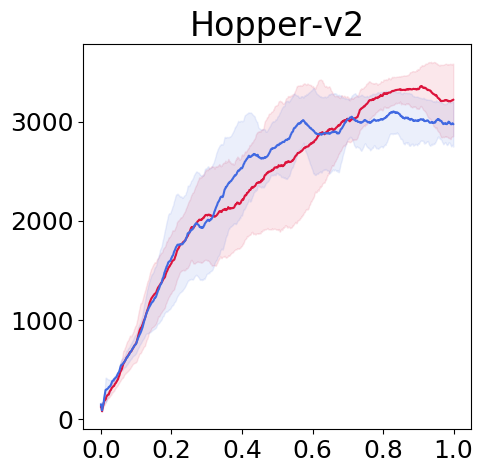}
  \end{subfigure}}

  \caption{Comparing SAC + DICE vs. SAC on 4 Mujoco environments. 
  x-axis indicates the number of environment steps. 
  y-axis indicates the total undiscounted return.
  The shaded areas denote one standard deviation between 5 runs of different random seeds.}
\label{fig:sac-dice}
\end{figure*}
% Note that our method still outperforms such a strong baseline OAC + DICE somewhat on Humanoid, Ant, Hopper and especially on Walker2d. We attribute the success to our exploration policy, which can be trained solely to execute the most informative action to correct flawed Q estimates, as the learned correction ratio will take care about the training distribution. 

\subsection{Examine the Correction Ratio}

\begin{figure*}[t]
  \makebox[\textwidth]{
  
  \begin{subfigure}{\resultfigsize\paperwidth}
    \includegraphics[width=\linewidth]{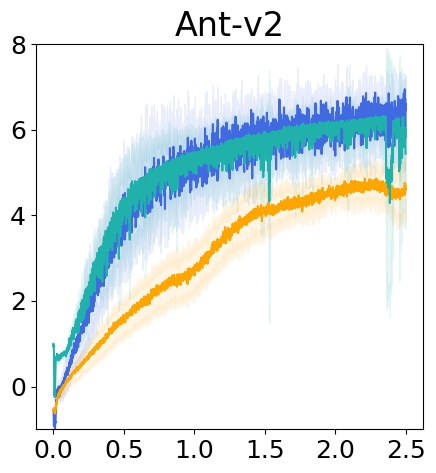}
  \end{subfigure}
  
  \begin{subfigure}{\resultfigsize\paperwidth}
    \includegraphics[width=\linewidth]{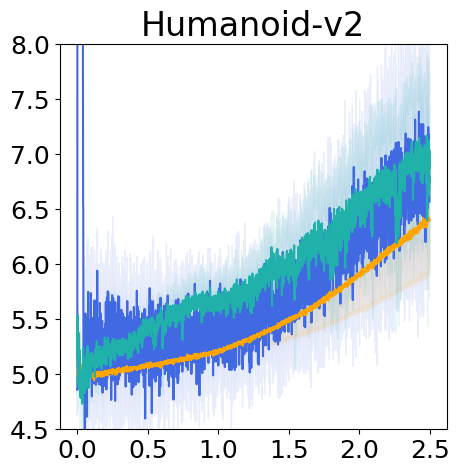}
  \end{subfigure}
    
  \begin{subfigure}{\resultfigsize\paperwidth}
    \includegraphics[width=\linewidth]{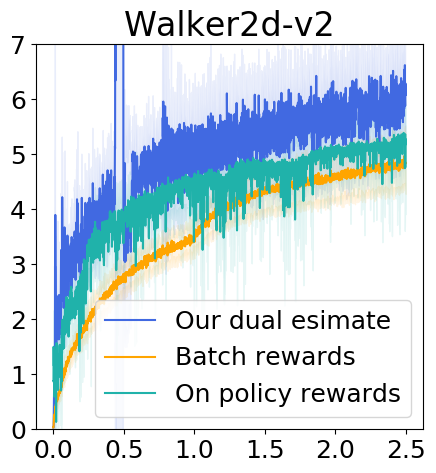}
  \end{subfigure}

  \begin{subfigure}{\resultfigsize\paperwidth}
    \includegraphics[width=\linewidth]{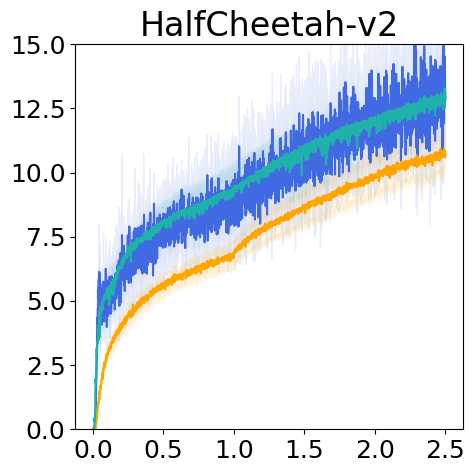}
  \end{subfigure}
  
  \begin{subfigure}{\resultfigsize\paperwidth}
    \includegraphics[width=\linewidth]{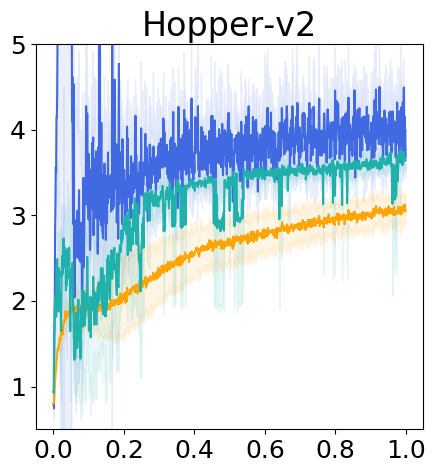}
  \end{subfigure}}
%   \centering{
%       \small{\color{blue}\textbf{---} }: Our dual estimate \qquad  {\color{red}\textbf{---} }: Batch rewards \qquad{\color{green}\textbf{---} }: On-policy Rewards
%   }

  \caption{Reward estimation results of our method (Our dual estimate), a naive baseline (Batch rewards), and a reference (On-policy rewards) on 5 Mujoco environments. 
  x-axis is the number of environment steps. 
  y-axis is the reward value.
  The shaded areas denote one standard deviation between 5 runs of different random seeds. }
\label{fig:rewards}
\end{figure*}

% \jiachen{To examine the quality of the learned distribution correction ratio, we use it to construct the dual estimator proposed in \citep{bestdice} to concurrently estimate the average per-step rewards (on-policy rewards) of the RL-learned target policy $\pi_T$. We highlight that this is harder than the standard OPE setting, as both $\pi_T$ and replay buffer are changing during RL training.}

To examine the quality of the learned distribution correction ratio, we propose to tackle a novel OPE setting that is more challenging than the traditional counterpart, where both the target policy and the replay buffer are changing. Following our Algorithm~\ref{alg} and the same experiment settings in Section~\ref{sec:avg_return}, 
we now use the following dual estimator defined in \citep{bestdice} to estimate the on-policy average per-step rewards $r^{\pi_T}_{on}$ (on-policy rewards) of $\pi_T$,
\begin{equation}
    \label{eq:dual_est}
    \hat{\mu}_{\zeta}(\pi_T) = \sum_{(s, a, r)\in\mathcal{D}}\zeta(s, a) r,
\end{equation}
where $\zeta$ is the distribution correction ratio estimated in Algorithm~\ref{alg}. We use the metric 
$|\hat{\mu}_{\zeta}(\pi_T) - r^{\pi_T}_{on}|$
% \begin{equation}\label{eq:ope_metric}
%     |\hat{\mu}_{\zeta}(\pi_T) - r^{\pi_T}_{on}|
% \end{equation}
to evaluate the OPE performance, which directly reflects the quality of $\zeta(s, a)$.
If $|\hat{\mu}_{\zeta}(\pi) - r^{\pi_T}_{on}| < |r_{\mathcal{D}} - r^{\pi_T}_{on}|$, where $r_{\mathcal{D}}$ is the mean reward in the current replay buffer $\mathcal{D}$, correcting the training distribution of RL using the learned $\zeta$ can at least provide positive momentum to the overall training.

Figure~\ref{fig:rewards} shows the performance of two OPE methods, i.e.,  ``Our dual estimate"~\eqref{eq:dual_est} and the naive ``Batch rewards" baseline, both are estimated using a batch of transitions. We also plot the ``on policy rewards" $r^{\pi_T}_{on}$ as a reference which averages over the rewards collected by the target policy during evaluation. As shown in \autoref{fig:rewards}, our OPE performance is surprisingly decent. The $\hat{\mu}_{\zeta}(\pi)$ learned by our method nicely matches the on-policy reward estimate on Humanoid, Ant, and HalfCheetah, while staying close to the on-policy reward estimate on Hopper and Walker2d. These experimental results demonstrate the effectiveness of our adaptation of the DICE scheme from standard OPE to the off-policy RL setting, which also explains the success of our proposed algorithm in experiments of Section~\ref{sec:avg_return} where the power of this adapted DICE scheme has been leveraged.

\subsection{Ablation Study}

\begin{figure*}[t]
  \makebox[\textwidth]{
  
  \begin{subfigure}{\mainfigsize\paperwidth}
    \includegraphics[width=\linewidth]{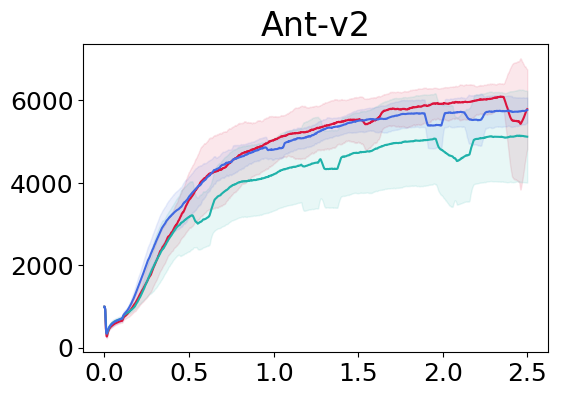}
  \end{subfigure}
  
  \begin{subfigure}{\mainfigsize\paperwidth}
    \includegraphics[width=\linewidth]{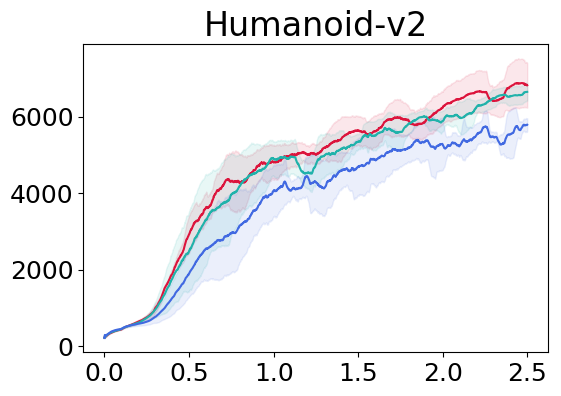}
  \end{subfigure}
    
  \begin{subfigure}{\mainfigsize\paperwidth}
    \includegraphics[width=\linewidth]{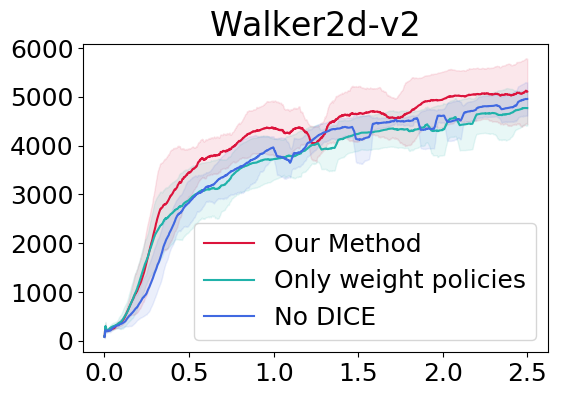}
  \end{subfigure}
  }
\makebox[\textwidth]{
  \begin{subfigure}{\mainfigsize\paperwidth}
    \includegraphics[width=\linewidth]{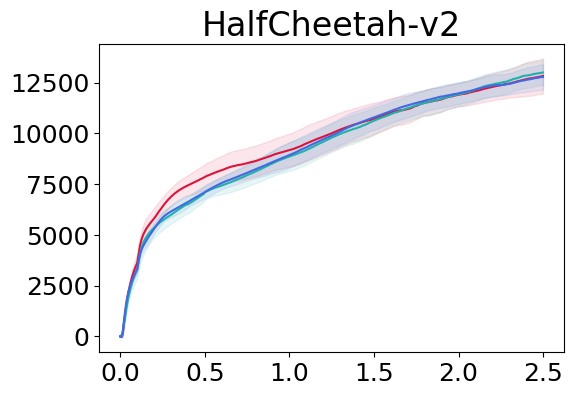}
  \end{subfigure}
  
  \begin{subfigure}{\mainfigsize\paperwidth}
    \includegraphics[width=\linewidth]{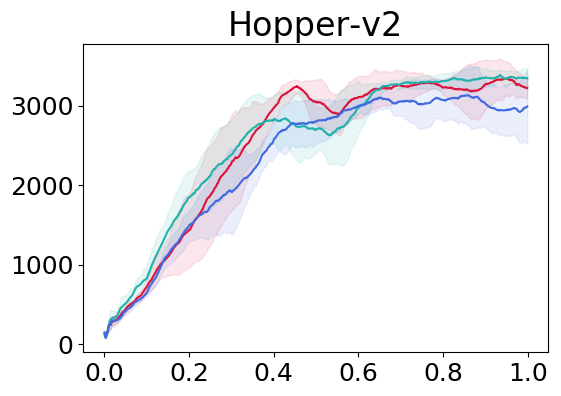}
  \end{subfigure}}
%   \centering{
%       \small{\color{blue}\textbf{---} }: Our Method \qquad  {\color{red}\textbf{---} }: Only weight policies \qquad{\color{green}\textbf{---} }: No DICE
%   } 
  \caption{Ablation on our method on 5 Mujoco environments. 
  x-axis indicates the number of environment steps. 
  y-axis indicates the total undiscounted return.
  The shaded areas denote one standard deviation. Please zoom-in to see more details.}
\label{fig:ablations}
\end{figure*}

We emphasize the key of our framework is to explicitly correct the RL training distribution while training an explore policy solely to correct the flawed Q value that undermines the target policy's gradient estimate. In Section~\ref{sec:avg_return}, we already show that having such an exploration is beneficial. We now show by ablation that such an aggressive training strategy is indeed backed by our distribution correction techniques. The first ablated version of our method, \textbf{No DICE}, does not have the DICE component, i.e., its training procedures can be summarized as Algorithm~\ref{alg} without Line 10-12, and always set $\tilde{\zeta}(s, a) = 1$. As shown in \autoref{fig:ablations}, the performance drops substantially on Humanoid, Walker2d, and Hopper, and somewhat on Ant and HalfCheetah, demonstrating the necessity of performing an effective distribution correction when applying an aggressive exploration strategy.

The second ablated version is to check the necessity of correcting the state-action distribution w.r.t the $Q$ function loss. We term the version of our method without applying the correction ratio to the $Q$ function objective as \textbf{Only weight policies}. From \autoref{fig:ablations}, we can see the performance dropped dramatically on Ant and slightly on the other tasks. We include more ablation studies in the Appendix, showing how different HP influences the performance of our method.

% Consequently, the state distribution of the replay buffer will not differ too much from the stationary state distribution of the target policy. Correcting such small difference will not improve the performance too much, which also explains why state-of-the-art off-policy RL algorithms just ignore such difference when using the target policy to do both exploration and evaluation \citep{SAC,TD3,ddpg}. 

\section{Conclusion}
In this paper, we proposed a novel off-policy RL paradigm that enables the training of an exploration policy that solely focuses on executing the most informative actions without constraining its divergence from the target policy. We support such an efficient exploration by learning the distribution correction ratio to correct the training distribution, which is realized by successfully adapting the DICE \citep{bestdice} optimization schemes into the off-policy actor-critic framework. By evaluating in challenging continuous control tasks, we demonstrate the effectiveness of our method.

\bibliographystyle{unsrtnat}
\bibliography{rl}

\newpage
\appendix

\begin{center}
	{\LARGE {Appendix}}
\end{center}

\section*{Outline of the Appendix } 
In this Appendix, we organize the content in the following ways:
\begin{itemize}
    \item In \autoref{sec:realted-work}, we additional related works.
    \item In \autoref{sec:proof_3p1}, we present the proof of \autoref{prop:correct_dist}.
    % \item In \autoref{sec:justify}, we present further justifications for our second modification to the DICE optimization Problem \eqref{eq:prob_dice}.
    \item In \autoref{sec:hp}, we present experiment details and hyper-parameter settings
     % \item In \autoref{sec:sac}, we further highlight the effectiveness of our learned distribution correction ratio by comparing SAC + DICE vs. SAC.
    \item In \autoref{sec:ablation}, we present additional ablation studies to examine how the hyper-parameters $\beta_{\mathrm{UB}}$, $\beta_{\mathrm{LB}}$ and $T$ impacts the performance. We also show the necessity of correcting the distribution of both the target and exploration policy objectives via comparing with another variant of our approach that only weights the Q function training.
\end{itemize}
\newpage

\section{Additional Related Work}\label{sec:realted-work}
 Incorporating the DICE family ~\citep{nachum2019dualdice, zhang2020gendice, zhang2020gradientdice, uehara2020minimax, nachum2020reinforcement, nachum2019algaedice, bestdice} into the training process of off-policy actor-critic is never easy, as the DICE optimization itself already poses great optimization difficulties. Moreover, in RL, the target policy is always changing and remains unstable. When tackling complicated continuous control tasks with high dimensional state-action space, the instability gets even exacerbated by the limited state-action coverage at the initial training stage~\citep{nachum2019dualdice}. We tackle such difficulties by directly making modifications to the objectives and successfully stabilize the training. Though suffering from bias, experimental results show that our optimization objective actually leads to surprisingly good OPE performance. 
 
In our work, by explicitly correcting the training distribution, we open up the direction to a potentially more profound exploration strategy design. There are a bunch of literature investigate to design better exploration objective ~\citep{brafman2002r, kearns1999efficient, sorg2012variance, lopes2012exploration, araya2012near, doshi2010nonparametric}. While most of the pioneering works mainly tackle the problem in discrete or small state-action space via directly operate on the transition matrix, which cannot be directly extended to complicated continuous control tasks. Noise-based methods~\citep{fortunato2017noisy, plappert2017parameter} improve exploration by perturbing the observation, state, or the parameters in the function approximator for obtaining diverse actions. Intrinsic motivation has also been introduced to encourage the agent to discover novel states~\citep{pathak2017curiosity, burda2018large, stadie2015incentivizing, tang2017exploration, pathak2019self} or better understand the environment~\citep{houthooft2016vime, zhao2019curiosity, burda2018exploration, savinov2018episodic, choshen2018dora, denil2016learning}. Inspired by Thompson sampling, Osband~\etal\citep{osband2016deep} propose to improve exploration of DQN~\citep{mnih2013playing} by modeling uncertainty. Eysenbach~\etal\citep{eysenbach2018diversity} propose to encourage exploration through maximizing the diversity of the learned behavior. SAC~\citep{SAC} tries to improve exploration through adding extra regularization on the entropy term, while OAC~\citep{OAC} derives an upper confidence bound on the state-action value function for facilitating exploration. Although these methods demonstrate that good exploration will benefit the learning process, these works neglect the issue of distribution shift and instability, which is also proven to be a non-trival problem in reinforcement learning~\citep{DISCOR,LFIW}.
\newpage
\section{Proof of  \autoref{prop:correct_dist}}\label{sec:proof_3p1}

\textbf{Proposition \ref{prop:correct_dist}.} $\nabla_{\theta_T}\hat{J}^{\pi_T}$ gives an unbiased estimate of the policy gradient \eqref{eq:policy_grad} with $\pi = \pi_T$.

\begin{proof}
Recall that
\begin{align*}
    \nabla_{\theta} J^{\pi} = \int_{s} d^\pi(s) \bigg(\int_{\varepsilon} \nabla_{\theta} \hat{Q}_{\mathrm{LB}}\left(s, f_{\theta}(s, \varepsilon)\right) \phi(\varepsilon) d \varepsilon + \alpha \int_{\varepsilon}-\nabla_{\theta} \log f_{\theta}(s, \varepsilon) \phi(\varepsilon) d \varepsilon \bigg)d s 
\end{align*}
\begin{align*}
    \hat{J}^{\pi_T} =
    \sum_{(s, a)\in \mathcal{D}} \frac{d^{\pi_T}(s, a)}{d^\mathcal{D}(s, a)}\big( \hat{Q}_{\mathrm{LB}}\left(s, f_{\theta_T}\left(s, \varepsilon\right)\right)
    - \alpha \log f_{\theta_T}\left(s, \varepsilon\right)\big).
\end{align*}

Therefore, $\nabla_{\theta_T}\hat{J}^{\pi_T}$ is an unbiased estimator for
\begin{align*}
     &\quad\int_{s}\int_{a} d^{\mathcal{D}}(s, a) \frac{d^{\pi_T}(s, a)}{d^\mathcal{D}(s, a)} \left(\int_{\varepsilon} \nabla_{\theta_T} \left(\hat{Q}_{\mathrm{LB}}\left(s, f_{\theta_T}(s, \varepsilon)\right) - \alpha\log f_{\theta_T}(s, \varepsilon)\right) \phi(\varepsilon) d \varepsilon \right)d a d s  \\
  &  = \int_{s}\int_{a} d^{\pi_T}(s,a) \left(\int_{\varepsilon} \nabla_{\theta_T} \left(\hat{Q}_{\mathrm{LB}}\left(s, f_{\theta_T}(s, \varepsilon)\right) - \alpha\log f_{\theta_T}(s, \varepsilon)\right) \phi(\varepsilon) d \varepsilon \right) d a d s \\
  &  = \int_{s} d^{\pi_T}(s) \left(\int_{\varepsilon} \nabla_{\theta_T} \left(\hat{Q}_{\mathrm{LB}}\left(s, f_{\theta_T}(s, \varepsilon)\right) - \alpha\log f_{\theta_T}(s, \varepsilon)\right) \phi(\varepsilon) d \varepsilon \right) d s= \nabla_{\theta_T} J^{\pi_T}
\end{align*}
\end{proof}

\clearpage

\section{Hyper-parameters (HP) and Experiment Details}\label{sec:hp}

The implementation of our method is based on OAC's officially released codes\footnote{\label{oac}\url{https://github.com/microsoft/oac-explore}}, which themselves are based on the SAC implementation in \textbf{RLkit} \footnote{\url{https://github.com/rail-berkeley/rlkit}}. We are aware of the fact that OAC's officially released codes$^{\ref{oac}}$ use a set of HP that is slightly different from its original paper. Specifically, they set $\beta_{\mathrm{LB}} = 1.0$ instead of $3.65$ as per (Table 1, Appendix E) of OAC's original paper~\citep{OAC}.

The implementation to solve the DICE optimization is based on the official \textbf{dice\_{rl}} repo\footnote{\url{https://github.com/google-research/dice_rl}}. We set the regularization function  $g_1(x) = g_2(x) = \frac{1}{m} |x|^m$ in Equation \eqref{loss:nu} and \eqref{loss:zeta}, where we treat the exponential $p$ as a hyper-parameter. All of our results are obtained by averaging across $5$ fixed random seeds as per Appendix D of OAC's original paper~\citep{OAC}.

We list the HP specific to our method and those different from SAC and OAC in \autoref{table:hp-our-method}. For completeness, we include the HP for SAC and OAC$^{\ref{oac}}$ in \autoref{table:hp-sac}. We set both $\alpha_\nu = 1$ and $\alpha_\zeta = 1$ as we found this is the only setting to stabilize the DICE optimization in our HP space. We set the temperature $T = 3.0$ as our preliminary experiment finds the value of $T$ from our HP space will not significantly impact the performance, which is proved in our ablation study (Appendix \ref{sec:ablation-T}). As for $\beta_{\mathrm{UB}}$ and $\beta_{\mathrm{LB}}$, we tune their values simultaneously. However, we do not conduct the full sweep in the HP space due to limited computational resources. We empirically find it beneficial to set a similar value for $\beta_{\mathrm{UB}}$ and $\beta_{\mathrm{LB}}$ (Appendix \ref{sec:ablation-beta}).

\begin{table}[H]
\centering
\caption{Hyper-parameters of our method}\label{table:hp-our-method}
\begin{tabular}{ l|l|l }
        \toprule
        { Hyper-parameters} & { Value} & { Range used for tuning} \\ 
        \midrule
        $\beta_{\mathrm{UB}}$  & 2.0  & $\{1.5, 2.0, 2.5, 3.0\}$\\ 

        $\beta_{\mathrm{LB}}$  & 2.5 & $\{1.5, 2.0, 2.5, 3.0\}$ \\ 
        
        Temperature $T$ & 3.0 & \{2.0, 3.0, 5.0\} \\

        Learning rate for $\nu, \zeta, \lambda$ & 0.0001 & Does not tune \\ 
        $\alpha_\nu$   & 1 & \{0, 1\}  \\ 
        $\alpha_\zeta$ & 1 & \{0, 1\}  \\
        discount($\gamma$) for DICE & 0.99  & Does not tune\\
        Regularization function exponential $m$ & 1.5 & Does not tune \\
        $\nu, \zeta$ network hidden sizes & [256, 256] & Does not tune \\
        $\nu,\zeta,\lambda$ optimizer & Adam~\citep{kingma2014adam}  & Does not tune\\
        $\nu,\zeta$ network nonlinearity & ReLU & Does not tune\\
        \bottomrule
        \end{tabular}
\end{table}

\begin{table}[H]
\centering
\caption{Hyper-parameters of SAC and OAC$^{\ref{oac}}$}\label{table:hp-sac}
    \begin{tabular}{ l|l }
        \toprule Parameter & Value \\
        \midrule policy and $Q$ optimizer & Adam~\citep{kingma2014adam} \\
        learning rate for policy and Q & $3 \cdot 10^{-4}$ \\
        discount($\gamma$) & 0.99 \\
        replay buffer size & $10^6$ \\
        policy and $Q$ network hidden sizes & [256, 256]  \\
        minibatch size & 256 \\
        policy and $Q$ nonlinearity & ReLU \\
        target smoothing rate($\tau$) & 0.005 \\
        target update interval & 1 \\
        gradient steps & 1 \\
        \midrule OAC-specific Parameter & Value \\
        \midrule shift multiplier $\sqrt{2\delta}$ & 6.86 \\
        $\beta_{\mathrm{UB}}$  & 4.66 \\
        $\beta_{\mathrm{LB}}$ & 1.0 \\
        \bottomrule
    \end{tabular}
\end{table}

\section{Additional Ablation Study}\label{sec:ablation}
\subsection{Ablation study of the temperature $T$}\label{sec:ablation-T}

\begin{figure*}[h]
  \makebox[\textwidth]{
  
  \begin{subfigure}{\appendixresultfigsize\paperwidth}
    \includegraphics[width=\linewidth]{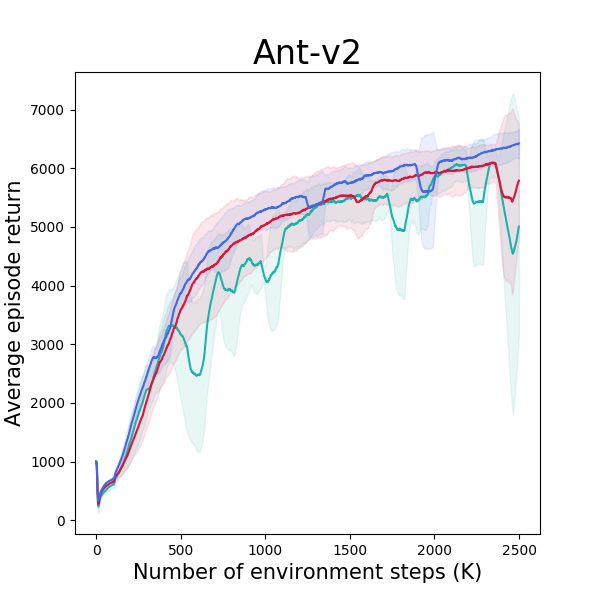}
  \end{subfigure}
  
  \begin{subfigure}{\appendixresultfigsize\paperwidth}
    \includegraphics[width=\linewidth]{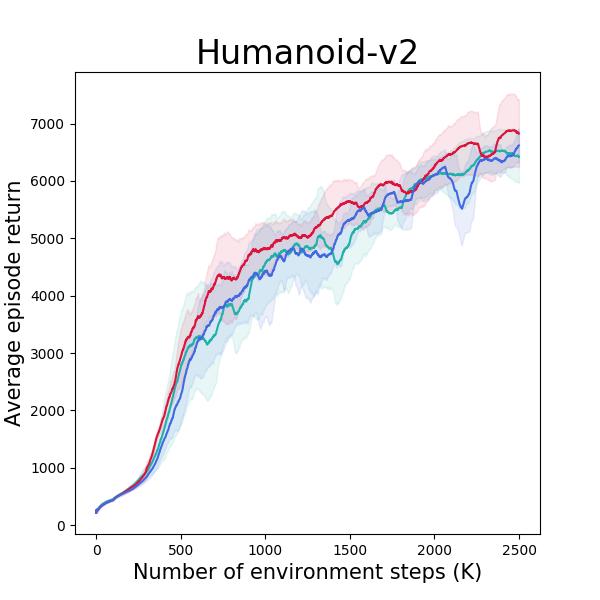}
  \end{subfigure}
    }
  \makebox[\textwidth]{
  \begin{subfigure}{\appendixresultfigsize\paperwidth}
    \includegraphics[width=\linewidth]{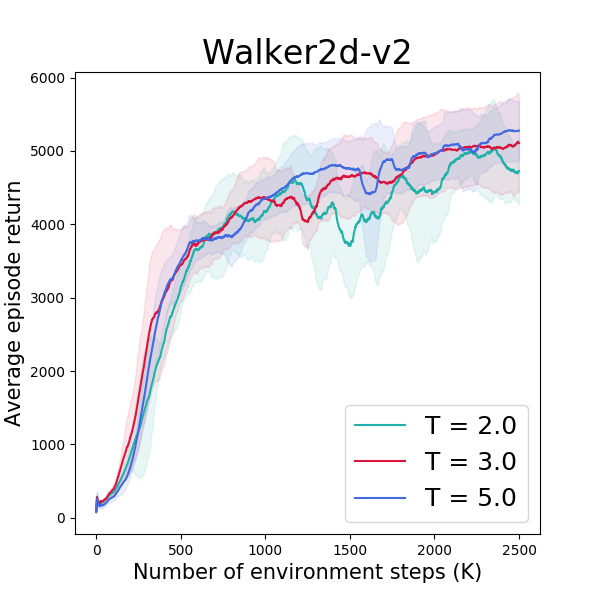}
  \end{subfigure}

  \begin{subfigure}{\appendixresultfigsize\paperwidth}
    \includegraphics[width=\linewidth]{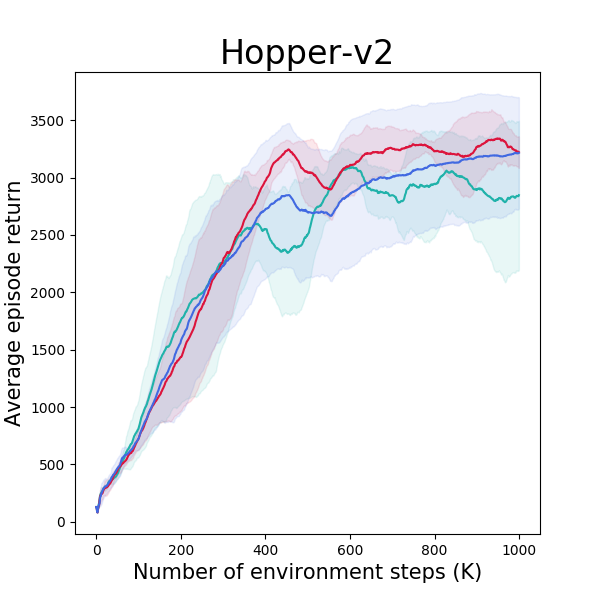}
  \end{subfigure}}

  \caption{Ablation study on the temperature $T$ of our method on 4 Mujoco environments. 
  x-axis indicates the number of environment steps and
  y-axis indicates the total undiscounted return.
  The shaded areas denote one standard deviation between 5 runs of different random seeds.}
\label{fig:ablations-temp}
\end{figure*}

In this section, we examine how the temperature $T$ influences the performance. Note that we keep the other HP unchanged as per \autoref{table:hp-our-method}. For $T\in\{2.0, 3.0, 5.0\}$, we present the performance of Our Method on 4 MuJoCo tasks as shown in $\autoref{fig:ablations-temp}$. A low value of $T$ will introduce instability to the training process, especially for Ant and Walker2d, while the performance on Humanoid remains stable w.r.t different $T$. With $T \geq 3.0$, we can achieve a stable training process and decent final performance in all 4 tasks.
\clearpage

\subsection{Ablation study of $\beta_{\mathrm{UB}}$ and $\beta_{\mathrm{LB}}$}\label{sec:ablation-beta}

\begin{figure*}[h]
  \makebox[\textwidth]{
  
  \begin{subfigure}{\appendixresultfigsize\paperwidth}
    \includegraphics[width=\linewidth]{imgs/ub_lb_t_3p0_result_ant.png}
  \end{subfigure}
  
  \begin{subfigure}{\appendixresultfigsize\paperwidth}
    \includegraphics[width=\linewidth]{imgs/ub_lb_t_3p0_result_humanoid.png}
  \end{subfigure}
 
  \begin{subfigure}{\appendixresultfigsize\paperwidth}
    \includegraphics[width=\linewidth]{imgs/ub_lb_t_3p0_result_halfcheetah.png}
  \end{subfigure}
    }

  \makebox[\textwidth]{
  \begin{subfigure}{\appendixresultfigsize\paperwidth}
    \includegraphics[width=\linewidth]{imgs/ub_lb_t_3p0_result_walker2d.png}
  \end{subfigure}

  \begin{subfigure}{\appendixresultfigsize\paperwidth}
    \includegraphics[width=\linewidth]{imgs/ub_lb_t_3p0_result_hopper.png}
  \end{subfigure}}

  \caption{Ablation study on the $\beta_{\mathrm{UB}},\beta_{\mathrm{LB}}$ of our method on 4 MuJoco tasks. 
  x-axis indicates the number of environment steps. 
  y-axis indicates the total undiscounted return.
  The shaded areas denote one standard deviation between 5 runs of different random seeds.}
\label{fig:ablations-ub-lb}
\end{figure*}

In this section, we examine how different values of $\beta_{\mathrm{UB}}$ and $\beta_{\mathrm{LB}}$ influence the performance of our method. We empirically find that setting similar values to $\beta_{\mathrm{UB}}$ and $\beta_{\mathrm{LB}}$ generally lead to good performance.

Note that the performance of $\beta_{\mathrm{UB}} = 1.5$, $\beta_{\mathrm{LB}} = 1.5$, and $T = 2.5$ is to show that Our Method is robust to a wide range of $\beta_{\mathrm{UB}}$ and $\beta_{\mathrm{LB}}$ beyonds $\{2.0, 2.5\}$.
\clearpage

\subsection{Ablations on correcting the training distributions of the policies}

\begin{figure*}[h]
  \makebox[\textwidth]{

  \begin{subfigure}{\appendixresultfigsize\paperwidth}
    \includegraphics[width=\linewidth]{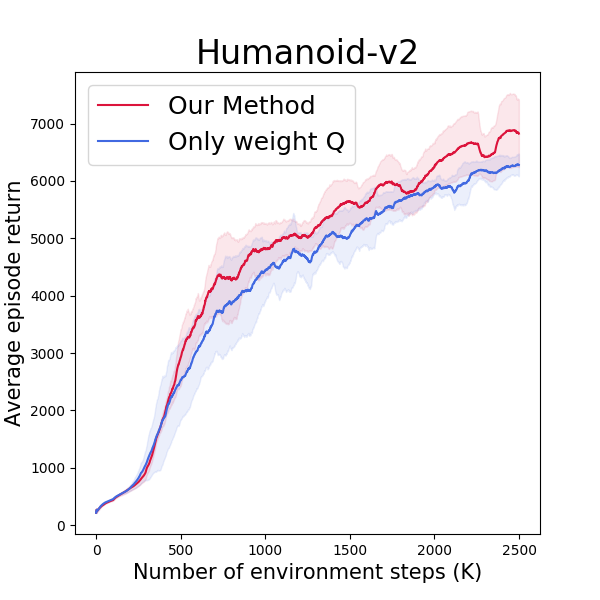}
  \end{subfigure}
 
  \begin{subfigure}{\appendixresultfigsize\paperwidth}
    \includegraphics[width=\linewidth]{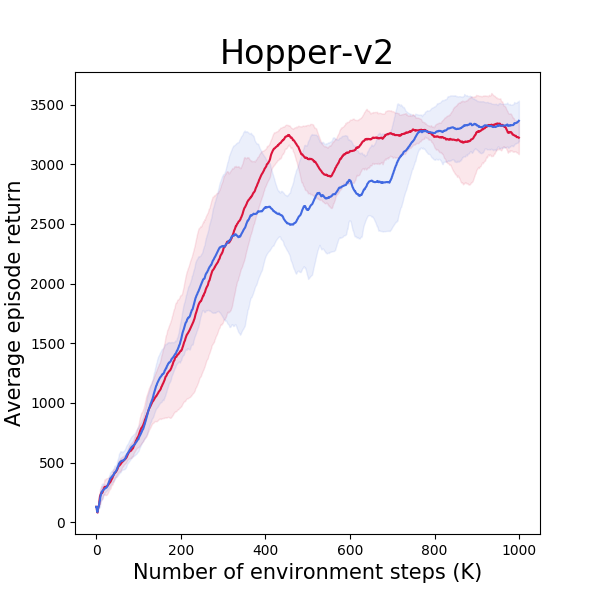}
  \end{subfigure}
    }

  \caption{Ablation study on correcting the training distributions of the policies on 2 MuJoCo tasks. 
  x-axis indicates the number of environment steps. 
  y-axis indicates the total undiscounted return.
  The shaded areas denote one standard deviation between 5 runs of different random seeds.}
\label{fig:ablations-weight-policy}
\end{figure*}

Similar to \textbf{Only weight policies}, we construct the ablated version of our method \textbf{Only weight Q} by removing the distribution correction applied to the target policy's objective \eqref{obj:cor_target} and exploration policy's objective \eqref{obj:cor_explore}. As shown in \autoref{fig:ablations-weight-policy}, ablating the distribution correction for both target and exploration policies of Our Method leads to a clear performance loss on Humanoid. And the ablated version of Our Method also converges slower on the Hopper. Therefore, we can conclude that it is necessary to correct the training distributions of the target and exploration policy.
\clearpage

\end{document}

% --- supplement: supplement.tex ---

% If your paper is accepted and the title of your paper is very long,
% the style will print as headings an error message. Use the following
% command to supply a shorter title of your paper so that it can be
% used as headings.
%
%\runningtitle{I use this title instead because the last one was very long}

% If your paper is accepted and the number of authors is large, the
% style will print as headings an error message. Use the following
% command to supply a shorter version of the authors names so that
% they can be used as headings (for example, use only the surnames)
%
%\runningauthor{Surname 1, Surname 2, Surname 3, ...., Surname n}

% Supplementary material: To improve readability, you must use a single-column format for the supplementary material.
\onecolumn
\aistatstitle{Appendix for Off-policy Reinforcement Learning with Optimistic Exploration and Distribution Correction}

\section*{Outline of the Appendix } 
In this Appendix, we organize the content in the following ways:
\begin{itemize}
    \item In \autoref{sec:proof_3p1}, we present the proof of \autoref{main-prop:correct_dist}.
    \item In \autoref{sec:justify}, we present further justifications for our second modification to the DICE optimization Problem \eqref{main-eq:prob_dice}.
    \item In \autoref{sec:hp}, we present experiment details and hyper-parameter settings
     \item In \autoref{sec:sac}, we further highlight the effectiveness of our learned distribution correction ratio by comparing SAC + DICE vs. SAC.
    \item In \autoref{sec:ablation}, we present additional ablation studies to examine how the hyper-parameters $\beta_{\mathrm{UB}}$, $\beta_{\mathrm{LB}}$ and $T$ impacts the performance. We also show the necessity of correcting the distribution of both the target and exploration policy objectives, via comparing with another variant of our approach that only weights the Q function training.
\end{itemize}

\vfill
\clearpage

\section{Proof of  \autoref{main-prop:correct_dist}}\label{sec:proof_3p1}

\textbf{Proposition \ref{main-prop:correct_dist}.} $\nabla_{\theta_T}\hat{J}^{\pi_T}$ gives an unbiased estimate of the policy gradient \eqref{main-eq:policy_grad} with $\pi = \pi_T$.

\begin{proof}
Recall that
\begin{align*}
    \nabla_{\theta} J^{\pi} = \int_{s} d^\pi(s) \bigg(\int_{\varepsilon} \nabla_{\theta} \hat{Q}_{\mathrm{LB}}\left(s, f_{\theta}(s, \varepsilon)\right) \phi(\varepsilon) d \varepsilon + \alpha \int_{\varepsilon}-\nabla_{\theta} \log f_{\theta}(s, \varepsilon) \phi(\varepsilon) d \varepsilon \bigg)d s 
\end{align*}
\begin{align*}
    \hat{J}^{\pi_T} =
    \sum_{(s, a)\in \mathcal{D}} \frac{d^{\pi_T}(s, a)}{d^\mathcal{D}(s, a)}\big( \hat{Q}_{\mathrm{LB}}\left(s, f_{\theta_T}\left(s, \varepsilon\right)\right)
    - \alpha \log f_{\theta_T}\left(s, \varepsilon\right)\big).
\end{align*}

Therefore, $\nabla_{\theta_T}\hat{J}^{\pi_T}$ is an unbiased estimator for
\begin{align*}
     &\quad\int_{s}\int_{a} d^{\mathcal{D}}(s, a) \frac{d^{\pi_T}(s, a)}{d^\mathcal{D}(s, a)} \left(\int_{\varepsilon} \nabla_{\theta_T} \left(\hat{Q}_{\mathrm{LB}}\left(s, f_{\theta_T}(s, \varepsilon)\right) - \alpha\log f_{\theta_T}(s, \varepsilon)\right) \phi(\varepsilon) d \varepsilon \right)d a d s  \\
  &  = \int_{s}\int_{a} d^{\pi_T}(s,a) \left(\int_{\varepsilon} \nabla_{\theta_T} \left(\hat{Q}_{\mathrm{LB}}\left(s, f_{\theta_T}(s, \varepsilon)\right) - \alpha\log f_{\theta_T}(s, \varepsilon)\right) \phi(\varepsilon) d \varepsilon \right) d a d s \\
  &  = \int_{s} d^{\pi_T}(s) \left(\int_{\varepsilon} \nabla_{\theta_T} \left(\hat{Q}_{\mathrm{LB}}\left(s, f_{\theta_T}(s, \varepsilon)\right) - \alpha\log f_{\theta_T}(s, \varepsilon)\right) \phi(\varepsilon) d \varepsilon \right) d s= \nabla_{\theta_T} J^{\pi_T}
\end{align*}
\end{proof}

\clearpage

\section{Justification for the second modification to DICE Optimization Problem (9)}\label{sec:justify}
Recall that the DICE optimization Problem~\eqref{main-eq:prob_dice} is equivalent to the following
\begin{align*}
    \max_{\zeta \geq 0} \min _{\nu, \lambda} L_{D}(\zeta, \nu, \lambda) :=
    & (1-\gamma) \cdot \mathbb{E}_{a_{0} \sim \pi_T\left(s_{0}\right) \atop s_{0}\in \rho_0(s)}\left[\nu\left(s_{0}, a_{0}\right)\right] + \lambda \\ 
    & +\mathbb{E}_{\left(s, a, r, s^{\prime}\right) \sim d^{\mathcal{D}}}\left[\zeta(s, a) \cdot\left(\mathcal{B}^{\pi_T}\nu(s, a)-\nu(s, a)\right) - \lambda\right] \nonumber\\
    & +\alpha_{\nu} \cdot \mathbb{E}_{(s, a) \sim d^{\mathcal{D}}}\left[g_{1}(\nu(s, a))\right] -\alpha_{\zeta} \cdot \mathbb{E}_{(s, a) \sim d^{\mathcal{D}}}\left[g_{2}(\zeta(s, a))\right],
\end{align*}

% First, we provide further explanation for our first modification, i.e., removing $(1-\gamma)\mathbb{E}_{ s_{0}\in \rho_0(s), a_{0} \sim \pi\left(\cdot|s_{0}\right)}\left[\nu\left(s_{0}, a_{0}\right)\right]$ from the objective. Note that setting $\gamma = 1$ will automatically removes this term. However, we still set $\gamma < 1$ for for the term
% $\mathcal{B}^{\pi_T}\nu(s, a) = \alpha_{R} \cdot R(s, a)+\gamma \nu\left(s^{\prime}, a^{\prime}\right), a' \sim \pi_T(s')$ as we empirically find it stabilizing the training.

% To justify our second modification, i.e., 
Our second modification to this objective is
adding an absolute value operator $|\cdot|$ to the single step bellman error $\mathcal{B}^{\pi}\nu(s, a) - \nu(s, a)$. 
While our empirical study finds that it substantially increases the training stability, we now show that this modification does not alter the optimal solution of the original optimization problem without regularization term, i.e., when $\alpha_\nu = \alpha_\zeta = 0$. Formally, our goal in this section is to prove 

\begin{theorem}\label{theo:equivalent}
Solving Problem \eqref{prob:lagrangian}
\begin{equation}\label{prob:lagrangian}
    \max_{\zeta\geq 0} \min _{\nu} L_{D}(\zeta, \nu):=(1-\gamma) \cdot \mathbb{E}_{a_{0} \sim \pi\left(\cdot|s_{0}\right)}\left[\nu\left(s_{0}, a_{0}\right)\right]+\mathbb{E}_{\left(s, a, r, s^{\prime}\right) \sim d^{\prime} \atop a^{\prime} \sim \pi\left(\cdot|s^{\prime}\right)}\left[\zeta(s, a) \cdot\left(r+\gamma \nu\left(s^{\prime}, a^{\prime}\right)-\nu(s, a)\right)\right]
\end{equation}
returns the same solutions as solving Problem \eqref{prob:lagrangian-equ} defined below
\begin{equation}\label{prob:lagrangian-equ}
    \max_{\zeta\geq 0} \min _{Q} L_{D}(\zeta, Q):=(1-\gamma) \cdot \mathbb{E}_{a_{0} \sim \pi\left(\cdot|s_{0}\right)}\left[Q\left(s_{0}, a_{0}\right)\right]+\mathbb{E}_{\left(s, a, r, s^{\prime}\right) \sim d^{\prime} \atop a^{\prime} \sim \pi\left(\cdot|s^{\prime}\right)}\bigg[\zeta(s, a) \cdot\left|r+\gamma Q\left(s^{\prime}, a^{\prime}\right)-Q(s, a)\right|\bigg]
\end{equation}
\end{theorem}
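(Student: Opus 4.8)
The plan is to produce one pair $(\zeta^\star,\nu^\star)$ that is simultaneously a saddle point of \eqref{prob:lagrangian} and of \eqref{prob:lagrangian-equ}, and to read off that the two programs return the same correction ratio. Write $\delta_\nu(s,a):=\mathcal{B}^\pi\nu(s,a)-\nu(s,a)$ for the Bellman residual and $\mathbb{E}_0[\,\cdot\,]:=\mathbb{E}_{s_0\sim\rho_0,\,a_0\sim\pi(\cdot|s_0)}[\,\cdot\,]$; in \eqref{prob:lagrangian} linearity lets me replace the sampled residual by its conditional expectation, so I carry out the analysis of both programs at the level of this expected residual (the exact regime in which \cite{bestdice} proves correctness). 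I would recall from \cite{bestdice} that \eqref{prob:lagrangian} is solved by $\zeta^\star(s,a)=d^\pi(s,a)/d^{\mathcal{D}}(s,a)$ with $\nu^\star=Q^\pi$, so that $\delta_{\nu^\star}\equiv 0$. The single tool the whole argument rests on is the DICE telescoping identity, valid for every $\nu$,
\begin{equation*}
(1-\gamma)\,\mathbb{E}_0[\nu] + \mathbb{E}_{(s,a)\sim d^\pi}[\delta_\nu(s,a)] = \mathbb{E}_{(s,a)\sim d^\pi}[R(s,a)] =: \rho,
\end{equation*}
which follows from the stationarity of the discounted occupancy, $d^\pi=(1-\gamma)\rho_0\pi+\gamma P^\pi_* d^\pi$ with $P^\pi_*$ the policy-induced state--action transition operator, and is completely independent of the choice of $\nu$.

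With this identity I would verify that $(\zeta^\star,Q^\pi)$ is a saddle point of the absolute-value program \eqref{prob:lagrangian-equ}. For the inner minimization, fix $\zeta=\zeta^\star$; since $\zeta^\star d^{\mathcal{D}}=d^\pi$ and $|\delta_Q|\ge\delta_Q$ pointwise,
\begin{equation*}
L_D(\zeta^\star,Q) = (1-\gamma)\mathbb{E}_0[Q] + \mathbb{E}_{d^\pi}[\,|\delta_Q|\,] \ \ge\ (1-\gamma)\mathbb{E}_0[Q] + \mathbb{E}_{d^\pi}[\delta_Q] \ =\ \rho,
\end{equation*}
where the last equality is the telescoping identity; equality throughout holds at $Q=Q^\pi$ because $\delta_{Q^\pi}\equiv 0$, so $Q^\pi$ minimizes $L_D(\zeta^\star,\cdot)$ with value $\rho$. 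For the outer maximization, fix $Q=Q^\pi$: then $|\delta_{Q^\pi}|\equiv 0$ makes $L_D(\zeta,Q^\pi)=(1-\gamma)\mathbb{E}_0[Q^\pi]=\rho$ constant in $\zeta$, so $\zeta^\star$ is a maximizer. Hence $(\zeta^\star,Q^\pi)$ is a saddle point of \eqref{prob:lagrangian-equ} with the same optimal value $\rho$ as \eqref{prob:lagrangian}, and the returned correction ratio $\zeta^\star=d^\pi/d^{\mathcal{D}}$ agrees.

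The main obstacle is precisely the inner minimization, where the non-smooth $|\cdot|$ blocks the clean ``annihilate the $\nu$-coefficient'' argument that pins down \eqref{prob:lagrangian}; my workaround is the bound $|\delta_Q|\ge\delta_Q$, which turns the telescoping identity into a $Q$-independent floor $\rho$ attained exactly at the zero-residual solution $Q^\pi$ --- the equality case $|\delta_{Q^\pi}|=\delta_{Q^\pi}=0$ is what renders the modification inert. The remaining subtlety I would flag is uniqueness: stripped of the regularizers, \eqref{prob:lagrangian-equ} admits every $\zeta\ge d^\pi/d^{\mathcal{D}}$ as an outer maximizer (and every $Q$ with $\delta_Q\ge 0$ $d^\pi$-a.e.\ as an inner minimizer), a strictly larger set than for \eqref{prob:lagrangian}. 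I would close this gap by reinstating the normalization carried by the $\lambda$ term of the full program \eqref{eq:prob_dice}: since $\mathbb{E}_{d^{\mathcal{D}}}[d^\pi/d^{\mathcal{D}}]=1$, the constraint $\mathbb{E}_{d^{\mathcal{D}}}[\zeta]=1$ together with $\zeta\ge d^\pi/d^{\mathcal{D}}\ge 0$ forces $\zeta=d^\pi/d^{\mathcal{D}}$ almost everywhere, recovering the exact equivalence of the returned solutions.
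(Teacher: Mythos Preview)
Your proof is correct and takes a genuinely different route from the paper's. The paper argues at the level of the underlying constrained programs: it observes that \eqref{prob:lagrangian} is the Lagrangian of the primal LP with equality constraint $Q = R + \gamma\,\mathcal{P}^\pi Q$, while \eqref{prob:lagrangian-equ} is the Lagrangian of the same LP written with the trivially equivalent constraint $|R + \gamma\,\mathcal{P}^\pi Q - Q| = 0$, and concludes from the equivalence of feasible sets that the two saddle-point problems agree. You instead work directly at the saddle-point level: using the telescoping identity and the pointwise bound $|\delta_Q|\ge\delta_Q$, you exhibit $(\zeta^\star,Q^\pi)$ as a saddle point of \eqref{prob:lagrangian-equ} with the same value $\rho$, never invoking LP duality. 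Your argument makes explicit \emph{why} the absolute value is inert---the optimal $Q^\pi$ has zero residual, so $|\delta_{Q^\pi}|=\delta_{Q^\pi}=0$---and you correctly flag the non-uniqueness of outer maximizers in the unregularized problem, an issue the paper's proof does not address. The paper's route is shorter but leans on the unargued step that equivalent constraint sets yield equivalent Lagrangian saddle points; yours is self-contained. One small caveat on your closing uniqueness fix: you show that every $\zeta\ge d^\pi/d^{\mathcal{D}}$ with unit mean equals $\zeta^\star$, but you have only established that such $\zeta$ are \emph{sufficient} to be outer maximizers, not that every maximizer dominates $\zeta^\star$; a complete argument would also need to rule out maximizers that fail this domination.
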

\begin{proof}
The general idea to prove \autoref{theo:equivalent} is to show that Problem \eqref{prob:lagrangian} and Problem \eqref{prob:lagrangian-equ} can be derived from two equivalent sets of optimization problems through Lagrangian.

We start with introducing the derivation of Problem \eqref{prob:lagrangian} from \cite{main-bestdice}, which makes the following assumption:
\begin{assume}[MDP ergodicity \cite{main-bestdice}]\label{assume:ergodicity}
There is unique fixed point solution to
\begin{equation*}
    d^{\pi}(s, a)=(1-\gamma) \rho_{0}(s) \pi(a \mid s)+\gamma \cdot \mathcal{P}_{*}^{\pi} d^{\pi}(s, a), \text { where } \mathcal{P}_{*}^{\pi} d(s, a):=\pi(a \mid s) \sum_{\tilde{s}, \tilde{a}} T(s \mid \tilde{s}, \tilde{a}) d(\tilde{s}, \tilde{a})
\end{equation*}
\end{assume}
To derive Problem \eqref{prob:lagrangian}, the author of \cite{main-bestdice} first reveal the duality between the $Q^\pi$ and $d^\pi$ by \autoref{theo:duality}.

\begin{theorem}[\cite{main-bestdice}]\label{theo:duality}
Given a policy $\pi$, under \autoref{assume:ergodicity}, its normalized expected per-step reward, defined as $\mu(\pi):=(1-\gamma) \mathbb{E}\left[\sum_{t=0}^{\infty} \gamma^{t} R\left(s_{t}, a_{t}\right) \mid s_{0} \sim \rho_{0}, \forall t, a_{t} \sim \pi\left(\cdot|s_{t}\right), s_{t+1} \sim T\left(\cdot|s_{t}, a_{t}\right)\right]$, can be expressed by

\begin{equation}\label{prob:dual-lp}
    \max _{d: S \times A \rightarrow \mathbb{R}} \mathbb{E}_{d}[R(s, a)], \quad \text { s.t., } \quad d(s, a)=(1-\gamma) \rho_{0}(s) \pi(a \mid s)+\gamma \cdot \mathcal{P}_{*}^{\pi} d^{\pi}(s, a)
\end{equation}
Problem \eqref{prob:dual-lp} is termed as dual-LP. Its corresponding primal LP is
\begin{equation}\label{prob:primal-lp}
    \min _{Q: S \times A \rightarrow \mathbb{R}}(1-\gamma) \mathbb{E}_{s_0\sim\rho_{0}, a_0\in\pi(\cdot|s)}[Q(s_0, a_0)], \quad \text { s.t., } \quad Q(s, a)=R(s, a)+\gamma \cdot \mathcal{P}^{\pi} Q(s, a),
\end{equation}
where $\mathcal{P}^{\pi} Q(s, a) = \mathbb{E}_{s^{\prime} \sim T(\cdot|s, a), a^{\prime} \sim \pi\left(\cdot|s^{\prime}\right)}\left[Q\left(s^{\prime}, a^{\prime}\right)\right]$.
\end{theorem}
By approaching Problem \eqref{prob:dual-lp} and \eqref{prob:primal-lp} through the Lagrangian and making change of variables $\zeta(s, a):=d(s, a) / d^{\mathcal{D}}(s, a)$ to enable the use of an arbitrary off-policy distribution $d^{\mathcal{D}}$, we can derive the following optimization problem
\begin{equation}\label{prob:lagrangian-no-pos}
    \max_{\zeta} \min _{\nu} L_{D}(\zeta, \nu):=(1-\gamma) \cdot \mathbb{E}_{a_{0} \sim \pi\left(\cdot|s_{0}\right)}\left[\nu\left(s_{0}, a_{0}\right)\right]+\mathbb{E}_{\left(s, a, r, s^{\prime}\right) \sim d^{\prime} \atop a^{\prime} \sim \pi\left(\cdot|s^{\prime}\right)}\left[\zeta(s, a) \cdot\left(r+\gamma \nu\left(s^{\prime}, a^{\prime}\right)-\nu(s, a)\right)\right]
\end{equation}

Note that solving Problem \eqref{prob:lagrangian-no-pos} above gives us the correction ratio we want $\zeta^*(s, a) = \frac{d^{\pi}(s, a)}{d^\mathcal{D}(s, a)} \geq 0$. Thus, the author of \cite{main-bestdice} further add the constraint $\zeta > 0$ to Problem \eqref{prob:lagrangian-no-pos} without affecting its optimal solution, which results in Problem \eqref{prob:lagrangian}.

Following a similar recipe as above, Problem \eqref{prob:lagrangian-equ} can be derived by approaching Problem \eqref{prob:dual-lp} and the following \eqref{prob:primal-lp-equ}, %defined below, 
through the Lagrangian and making change of variables $\zeta(s, a):=d(s, a) / d^{\mathcal{D}}(s, a)$
\begin{equation}\label{prob:primal-lp-equ}
    \min _{Q: S \times A \rightarrow \mathbb{R}}(1-\gamma) \mathbb{E}_{s_0\sim\rho_{0}, a_0\in\pi(\cdot|s)}[Q(s_0, a_0)], \quad \text { s.t., } \quad \left|R(s, a)+\gamma \cdot \mathcal{P}^{\pi} Q(s, a) - Q(s, a)\right| = 0
\end{equation}
At this point, the only thing left to show is that solving Problem \eqref{prob:primal-lp} returns the same solution as solving Problem \eqref{prob:primal-lp-equ}. Such an equivalence is obvious, as we have
\begin{equation}
    Q(s, a) = R(s, a)+\gamma \cdot \mathcal{P}^{\pi} Q(s, a) \iff 
    \left|R(s, a)+\gamma \cdot \mathcal{P}^{\pi} Q(s, a) - Q(s, a)\right| = 0
\end{equation}

We thus claim that adding an absolute value operator $|\cdot|$ to the single step bellman error $\mathcal{B}^{\pi}\nu(s, a) - \nu(s, a)$ of Problem \eqref{main-eq:prob_dice} will not affects its validality in terms of deriving the correction ratio.
\end{proof}

\clearpage

\section{Hyper-parameters (HP) and Experiment Details}\label{sec:hp}

The implementation of our method is based on OAC's officially released codes\footnote{\label{oac}\url{https://github.com/microsoft/oac-explore}}, which themselves are based on the SAC implementation in \textbf{RLkit} \footnote{\url{https://github.com/rail-berkeley/rlkit}}, a widely used RL library with reputation. We are aware of the fact that OAC's officially released codes$^{\ref{oac}}$ use a set of HP that is slightly different from its original paper. Specifically, they set $\beta_{\mathrm{LB}} = 1.0$ instead of $3.65$ as per (Table 1, Appendix E) of OAC's original paper~\cite{OAC}.

The implementation to solve the DICE optimization is based on the official \textbf{dice\_{rl}} repo\footnote{\url{https://github.com/google-research/dice_rl}}. We set the regularization function  $g_1(x) = g_2(x) = \frac{1}{m} |x|^m$ in Equation \eqref{main-loss:nu} and \eqref{main-loss:zeta}, where we treat the exponential $p$ as a hyper-parameter. All of our results are obtained by averaging across $5$ fixed random seeds as per Appendix D of OAC's original paper~\cite{OAC}.

We list the HP that are specific to our method and those different from both SAC and OAC in \autoref{table:hp-our-method}. For completeness, we include the HP for SAC and OAC$^{\ref{oac}}$ in \autoref{table:hp-sac}. We set both $\alpha_\nu = 1$ and $\alpha_\zeta = 1$ as we found this is the only setting to stabilize the DICE optimization in our HP space. We set the temperature $T = 3.0$ as our preliminary experiment finds the value of $T$ from our HP space will not significantly impact the performance, which is proved in our ablation study (\autoref{sec:ablation-T}). As for $\beta_{\mathrm{UB}}$ and $\beta_{\mathrm{LB}}$, we tune their values simultaneously. However, we do not conduct the full sweep in the HP space due to limited computational resources. We empirically find it beneficial to set a similar value for $\beta_{\mathrm{UB}}$ and $\beta_{\mathrm{LB}}$ (\autoref{sec:ablation-beta}).

\begin{table}[H]
\centering
\caption{Hyper-parameters of our method}\label{table:hp-our-method}
\begin{tabular}{ |l|l|l| }
        \hline
        {\bf Hyper-parameters} & {\bf Value} & {\bf Range used for tuning} \\ 
        \hline
        $\beta_{\mathrm{UB}}$  & 2.0  & $\{1.5, 2.0, 2.5, 3.0\}$\\ 

        $\beta_{\mathrm{LB}}$  & 2.5 & $\{1.5, 2.0, 2.5, 3.0\}$ \\ 
        
        Temperature $T$ & 3.0 & \{2.0, 3.0, 5.0\} \\

        Learning rate for $\nu, \zeta, \lambda$ & 0.0001 & Does not tune \\ 
        $\alpha_\nu$   & 1 & \{0, 1\}  \\ 
        $\alpha_\zeta$ & 1 & \{0, 1\}  \\
        discount($\gamma$) for DICE & 0.99  & Does not tune\\
        Regularization function exponential $m$ & 1.5 & Does not tune \\
        $\nu, \zeta$ network hidden sizes & [256, 256] & Does not tune \\
        $\nu,\zeta,\lambda$ optimizer & Adam~\cite{kingma2014adam}  & Does not tune\\
        $\nu,\zeta$ network nonlinearity & ReLU & Does not tune\\
        \hline
        \end{tabular}
\end{table}

\begin{table}[H]
\centering
\caption{Hyper-parameters of SAC and OAC$^{\ref{oac}}$}\label{table:hp-sac}
    \begin{tabular}{ |l|l| }
        \hline Parameter & Value \\
        \hline policy and $Q$ optimizer & Adam~\cite{kingma2014adam} \\
        learning rate for policy and Q & $3 \cdot 10^{-4}$ \\
        discount($\gamma$) & 0.99 \\
        replay buffer size & $10^6$ \\
        policy and $Q$ network hidden sizes & [256, 256]  \\
        minibatch size & 256 \\
        policy and $Q$ nonlinearity & ReLU \\
        target smoothing rate($\tau$) & 0.005 \\
        target update interval & 1 \\
        gradient steps & 1 \\
        \hline OAC-specific Parameter & Value \\
        \hline shift multiplier $\sqrt{2\delta}$ & 6.86 \\
        $\beta_{\mathrm{UB}}$  & 4.66 \\
        $\beta_{\mathrm{LB}}$ & 1.0 \\
        \hline
    \end{tabular}
\end{table}

\section{SAC + DICE vs. SAC}\label{sec:sac}

\begin{figure*}[h]
  \makebox[\textwidth]{
  
  \begin{subfigure}{\resultfigsize\paperwidth}
    \includegraphics[width=\linewidth]{imgs/sac_result_ant.png}
  \end{subfigure}
  
  \begin{subfigure}{\resultfigsize\paperwidth}
    \includegraphics[width=\linewidth]{imgs/sac_result_humanoid.png}
  \end{subfigure}
    }
  \makebox[\textwidth]{
  \begin{subfigure}{\resultfigsize\paperwidth}
    \includegraphics[width=\linewidth]{imgs/sac_result_walker2d.png}
  \end{subfigure}

  \begin{subfigure}{\resultfigsize\paperwidth}
    \includegraphics[width=\linewidth]{imgs/sac_result_hopper.png}
  \end{subfigure}}

  \caption{Comparing SAC + DICE vs. SAC on 4 Mujoco environments. 
  x-axis indicates the number of environment steps. 
  y-axis indicates the total undiscounted return.
  The shaded areas denote one standard deviation between 5 runs of different random seeds.}
\label{fig:sac-dice}
\end{figure*}
Like OAC + DICE, we implement SAC + DICE by integrating our DICE correction ratio into SAC without any HP tuning. As shown in \autoref{fig:sac-dice}, SAC + DICE outperforms SAC on the challenging Humanoid and Walker2d by a clear margin while being on par with SAC on the Ant. There is also a slight performance gain on the Hopper.

% Until this point, 
It is worth noted that integrating our learned DICE correction ratio into \textbf{NO DICE} (variant of ours), OAC, and SAC all results in a performance gain. We thus highlight that our proposed DICE correction scheme
effectively benefits general off-policy actor-critic training.

\clearpage
\section{Additional Ablation Study}\label{sec:ablation}
\subsection{Ablation study of the temperature $T$}\label{sec:ablation-T}

\begin{figure*}[h]
  \makebox[\textwidth]{
  
  \begin{subfigure}{\resultfigsize\paperwidth}
    \includegraphics[width=\linewidth]{imgs/temperature_result_ant.png}
  \end{subfigure}
  
  \begin{subfigure}{\resultfigsize\paperwidth}
    \includegraphics[width=\linewidth]{imgs/temperature_result_humanoid.png}
  \end{subfigure}
    }
  \makebox[\textwidth]{
  \begin{subfigure}{\resultfigsize\paperwidth}
    \includegraphics[width=\linewidth]{imgs/temperature_result_walker2d.png}
  \end{subfigure}

  \begin{subfigure}{\resultfigsize\paperwidth}
    \includegraphics[width=\linewidth]{imgs/temperature_result_hopper.png}
  \end{subfigure}}

  \caption{Ablation study on the temperature $T$ of our method on 4 Mujoco environments. 
  x-axis indicates the number of environment steps and
  y-axis indicates the total undiscounted return.
  The shaded areas denote one standard deviation between 5 runs of different random seeds.}
\label{fig:ablations-temp}
\end{figure*}

In this section, we examine how the temperature $T$ influences the performance. Note that we keep the other HP unchanged as per \autoref{table:hp-our-method}. For $T\in\{2.0, 3.0, 5.0\}$, we present the performance of Our Method on 4 MuJoCo tasks as shown in $\autoref{fig:ablations-temp}$. A low value of $T$ will introduce instability to the training process, especially for Ant and Walker2d, while the performance on Humanoid remains stable w.r.t different $T$. With $T \geq 3.0$, we can achieve a stable training process and decent final performance in all 4 tasks.
\clearpage

\subsection{Ablation study of $\beta_{\mathrm{UB}}$ and $\beta_{\mathrm{LB}}$}\label{sec:ablation-beta}

\begin{figure*}[h]
  \makebox[\textwidth]{
  
  \begin{subfigure}{\resultfigsize\paperwidth}
    \includegraphics[width=\linewidth]{imgs/ub_lb_t_3p0_result_ant.png}
  \end{subfigure}
  
  \begin{subfigure}{\resultfigsize\paperwidth}
    \includegraphics[width=\linewidth]{imgs/ub_lb_t_3p0_result_humanoid.png}
  \end{subfigure}
 
  \begin{subfigure}{\resultfigsize\paperwidth}
    \includegraphics[width=\linewidth]{imgs/ub_lb_t_3p0_result_halfcheetah.png}
  \end{subfigure}
    }

  \makebox[\textwidth]{
  \begin{subfigure}{\resultfigsize\paperwidth}
    \includegraphics[width=\linewidth]{imgs/ub_lb_t_3p0_result_walker2d.png}
  \end{subfigure}

  \begin{subfigure}{\resultfigsize\paperwidth}
    \includegraphics[width=\linewidth]{imgs/ub_lb_t_3p0_result_hopper.png}
  \end{subfigure}}

  \caption{Ablation study on the $\beta_{\mathrm{UB}},\beta_{\mathrm{LB}}$ of our method on 4 MuJoco tasks. 
  x-axis indicates the number of environment steps. 
  y-axis indicates the total undiscounted return.
  The shaded areas denote one standard deviation between 5 runs of different random seeds.}
\label{fig:ablations-ub-lb}
\end{figure*}

In this section, we examine how different values of $\beta_{\mathrm{UB}}$ and $\beta_{\mathrm{LB}}$ influence the performance of our method. We empirically find that setting similar values to $\beta_{\mathrm{UB}}$ and $\beta_{\mathrm{LB}}$ generally lead to good performance.

Note that the performance of $\beta_{\mathrm{UB}} = 1.5$, $\beta_{\mathrm{LB}} = 1.5$, and $T = 2.5$ is to show that Our Method is robust to a wide range of $\beta_{\mathrm{UB}}$ and $\beta_{\mathrm{LB}}$ beyonds $\{2.0, 2.5\}$.
\clearpage

\subsection{Ablations on correcting the training distributions of the policies}

\begin{figure*}[h]
  \makebox[\textwidth]{

  \begin{subfigure}{\resultfigsize\paperwidth}
    \includegraphics[width=\linewidth]{imgs/only_weight_q_result_humanoid.png}
  \end{subfigure}
 
  \begin{subfigure}{\resultfigsize\paperwidth}
    \includegraphics[width=\linewidth]{imgs/only_weight_q_result_hopper.png}
  \end{subfigure}
    }

  \caption{Ablation study on correcting the training distributions of the policies on 2 MuJoCo tasks. 
  x-axis indicates the number of environment steps. 
  y-axis indicates the total undiscounted return.
  The shaded areas denote one standard deviation between 5 runs of different random seeds.}
\label{fig:ablations-weight-policy}
\end{figure*}

Similar to \textbf{Only weight policies}, we construct the ablated version of our method \textbf{Only weight Q} by removing the distribution correction applied to the target policy's objective \eqref{main-obj:cor_target} and exploration policy's objective \eqref{main-obj:cor_explore}. As shown in \autoref{fig:ablations-weight-policy}, ablating the distribution correction for both target and exploration policies of Our Method leads to a clear performance loss on Humanoid. And the ablated version of Our Method also converges slower on the Hopper. Therefore, we can conclude that it is necessary to correct the training distributions of the target and exploration policy.
\clearpage

\bibliographystyle{unsrt}
\bibliography{rl}